\documentclass[doublecolumn,10pt]{elsarticle}

\usepackage[utf8]{inputenc}
\usepackage[T1]{fontenc}
\usepackage[margin=0.75in]{geometry}
\usepackage{amsmath,amssymb,amsthm,bm}
\usepackage{graphicx}
\usepackage{booktabs}
\usepackage{array}
\usepackage{xcolor}
\usepackage{subcaption}
\usepackage{caption}
\usepackage{algorithm}
\usepackage{algpseudocode}
\biboptions{sort&compress}
\usepackage{tikz}
\usepackage{pgfplots}
\pgfplotsset{compat=1.17}
\usetikzlibrary{arrows.meta,positioning,fit,backgrounds,calc}
\usepgfplotslibrary{fillbetween}
\usepackage[colorlinks=true,allcolors=blue!60!black]{hyperref}

\theoremstyle{plain}
\newtheorem{theorem}{Theorem}
\newtheorem{lemma}{Lemma}
\newtheorem{corollary}{Corollary}
\newtheorem{proposition}{Proposition}
\theoremstyle{definition}
\newtheorem{assumption}{Assumption}
\theoremstyle{remark}
\newtheorem{remark}{Remark}

\definecolor{cSrc}{RGB}{31,119,180}
\definecolor{cTgt}{RGB}{214,120,20}
\definecolor{cMod}{RGB}{44,160,44}
\definecolor{cAcc}{RGB}{200,30,30}

\newcommand{\E}{\mathbb{E}}
\newcommand{\soft}{\sigma}
\newcommand{\kap}{\kappa}
\newcommand{\Tstar}{T^{\star}}

\journal{Neural Networks}
\begin{document}

\begin{frontmatter}

\title{When does distribution shift break graph neural networks calibration?}

\author[inst1]{Abderaouf Bahi}
\address[inst1]{Computer Science and Applied Mathematics Laboratory (LIMA),
Faculty of Science and Technology, Chadli Bendjedid University, El Tarf 36000, Algeria}

\begin{abstract}
\noindent
Graph neural networks (GNNs) are increasingly deployed in real-world applications where distribution shift is unavoidable. However, how such shifts affect model calibration, defined as the agreement between predictive confidence and actual accuracy, remains poorly understood, and existing graph calibration methods typically rely on labeled validation data from the deployment distribution. In this work, I present the first closed-form theoretical characterization of GNN calibration under distribution shift. I show that calibration is governed by a single scalar quantity that explicitly depends on structural changes between the source and target graphs, as well as feature quality. This characterization precisely identifies when a model becomes over-confident, under-confident, or remains calibrated, and directly yields the optimal temperature scaling strategy. I further extend the analysis to graph convolutional networks with symmetric normalization, multi-class classification, and covariate shift, and derive a theoretical upper bound on the expected calibration error. My analysis also reveals that, under homogeneous distribution shift, a single global temperature is theoretically optimal, providing a principled explanation for why more complex node-wise recalibration methods offer no additional benefit. Building on these theoretical insights, I propose STAC, a source-free, label-free calibration method. Experiments on synthetic benchmarks demonstrate substantial calibration improvements, while evaluations on five real-world graph datasets show that reliable calibration without target labels remains challenging despite the strong predictive power of the theory. These findings identify label-free accuracy estimation under distribution shift as the central unresolved challenge for practical calibration of GNNs and establish a theoretical foundation for future research on trustworthy graph learning.
 Project details are available at \url{https://github.com/AraoufBh/GNN-Calibration}.
\end{abstract}

\begin{keyword}
graph neural networks \sep calibration \sep distribution shift \sep homophily \sep
temperature scaling \sep uncertainty quantification 
\end{keyword}

\end{frontmatter}

\section{Introduction}\label{sec:intro}

Graph neural networks (GNNs) \cite{bahi2026graphneuralnetworksapplications} are now routinely deployed in settings where the graph seen
at test time differs from the one used for training: social and financial networks
evolve, sensor and infrastructure graphs are rewired after failures or upgrades, and
citation, molecular or recommendation graphs used at inference are drawn from different
sub-populations than those used to train the model. In many of these settings raw
predictive accuracy is not the primary bottleneck; what determines whether a GNN can be
trusted inside a fraud-review, clinical-triage, or infrastructure-monitoring pipeline is
whether its confidence scores are \emph{calibrated}: whether a prediction reported with
$90\%$ confidence is, in fact, correct $90\%$ of the time. Miscalibration is not
cosmetic: it silently corrupts every downstream mechanism that consumes probability
outputs, from selective prediction and human-in-the-loop triage to risk-weighted
decision rules and model ensembling.

Two observations make GNN calibration under distribution shift both practically
important and theoretically awkward. \emph{(i)} GNNs are miscalibrated in
structure-dependent ways; unlike image classifiers (typically over-confident), they are
often \emph{under}-confident in-distribution, and the size of the gap correlates with
graph properties such as homophily and node degree~\citep{wang2021cagcn,hsu2022gats}.
\emph{(ii)} Standard calibrators need labels from the test distribution: temperature
scaling~\citep{guo2017calibration} and its graph-specific
variants~\citep{wang2021cagcn,hsu2022gats,zhuang2025gets,li2025wats,tang2024simcalib} all
fit a temperature (or a temperature-valued function of the node) by minimizing negative
log-likelihood on a labeled validation set drawn from the \emph{same} distribution as
training. Under shift that temperature is simply wrong, and no target labels are
available to refit it.

Despite an increasingly rich empirical literature on GNN calibration (Section~\ref{sec:related}),
no existing account explains, from first principles, \emph{how} and \emph{why} shift
moves a GNN's confidence away from its accuracy, \emph{in which direction}, and \emph{by
how much}. Existing post-hoc graph calibrators (CaGCN, GATS, GETS, WATS,
SimCalib) are all validated in-distribution and are, by construction, vulnerable to
exactly the failure mode that matters most in deployment: a validation set that no
longer represents the test distribution. Empirical studies of GNN reliability under
shift~\citep{trivedi2024gduq} document that the problem is real and that architectural
expressivity alone does not fix it, but stop short of a mechanism. I supply that
mechanism for a model simple enough to analyze in closed form yet expressive enough to
reproduce, quantitatively, the calibration behaviour of trained GCNs in simulation and on
five real graphs spanning two orders of magnitude in edge homophily
(Fig.~\ref{fig:concept} previews the core result).

\paragraph{Contributions.}
\textbf{(C1) A closed-form calibration slope} (Section~\ref{sec:theory}):
$\text{conf}(t)=\soft(t)$ but $\text{acc}(t)=\soft(\kap t)$, with $\kap$ explicit in
homophily and signal-to-noise ratio (SNR); over/under-confidence and $\Tstar=1/\kap$
follow immediately, and I prove a matching ECE bound.
\textbf{(C2) Extensions} to the self-loop GCN operator, to $K$ classes via a signal
coefficient $c_K(h)$, and to covariate shift.
\textbf{(C3) A structural corollary} (global-temperature optimality under homogeneous
shift) that \emph{predicts}, ahead of running any experiment, a negative result: per-node
recalibration cannot help unless shift is heterogeneous across nodes.
\textbf{(C4) A method, a reduction, and an honest negative result}
(Sections~\ref{sec:method}--\ref{sec:exp}): I turn the theory into STAC, a source-free,
label-free recalibrator, and show the whole task reduces to estimating one scalar
(target accuracy) from unlabeled data; a method built on standard estimators halves the
gap on synthetic shifts but is unreliable across five real graphs, while a single oracle
temperature recovers calibration on all of them. Label-free accuracy estimation under
shift is the sole remaining bottleneck.
\textbf{(C5) A positioning of the theory against the closest empirical alternatives}
(Section~\ref{sec:related}, Table~\ref{tab:related}): I situate the closed-form account
against recent empirical GNN-reliability methods~\citep{trivedi2024gduq,huang2023conformal}
and theory-grounded graph domain adaptation~\citep{you2023gda}, clarifying which notion of
reliability each provides and how they could be combined.

The remainder of this paper is organized as follows: Section~\ref{sec:related} positions this work against the
calibration, distribution-shift, and label-free-estimation literatures and summarizes
eleven closely related methods along the axes that matter for deployment
(Table~\ref{tab:related}). Section~\ref{sec:setup} fixes notation and the CSBM I
analyze, and motivates why a linear GNN on a CSBM is the right level of abstraction.
Section~\ref{sec:theory} states and proves the calibration-slope theorem and its
extensions. Section~\ref{sec:method} turns the theory into STAC, a source-free,
label-free recalibration method. Section~\ref{sec:exp} validates every theoretical claim
numerically and evaluates STAC on synthetic shifts and five real graphs.
Section~\ref{sec:discussion} discusses the implications, is honest about limitations, and
lays out the open problems the theory exposes. Section~\ref{sec:conclusion} concludes.
Appendix~\ref{app:proofs} gives full proofs of every theoretical result.

\begin{figure}[t]
\centering
\begin{tikzpicture}
\begin{axis}[width=\linewidth,height=4.7cm,xlabel={logit magnitude $t$},
  ylabel={probability},domain=0:6,samples=80,ymin=0.45,ymax=1.03,
  axis lines=left,xtick={0,2,4,6},ytick={0.5,0.75,1.0},
  legend style={at={(0.98,0.03)},anchor=south east,font=\footnotesize,draw=none,fill=none}]
\addplot[name path=conf,very thick,cSrc]{1/(1+exp(-x))};
\addplot[name path=acc,very thick,cAcc,dashed]{1/(1+exp(-0.5*x))};
\addplot[cSrc!18,forget plot]fill between[of=conf and acc];
\legend{{confidence $\soft(t)$},{accuracy $\soft(\kap t)$, $\kap{=}0.5$}}
\end{axis}
\end{tikzpicture}
\caption{\textbf{The calibration slope.} A source-calibrated linear GNN reports
confidence $\soft(t)$ (blue) but its true accuracy is $\soft(\kap t)$ (red). The shaded
area is the calibration error. When homophily/covariate shift makes $\kap<1$ the model is
over-confident; a single temperature $\Tstar=1/\kap$ (i.e. $\soft(t/\Tstar)=\soft(\kap
t)$) removes the gap exactly.}
\label{fig:concept}
\end{figure}

\section{Related work}\label{sec:related}

\subsection{Calibration of neural networks}
Modern networks trained with cross-entropy are systematically miscalibrated, typically
over-confident on natural images~\citep{guo2017calibration}. Temperature
scaling~\citep{guo2017calibration}, dividing the logits by a single scalar fit by
minimizing NLL on a labeled validation set, remains the most widely used post-hoc fix,
extended to class-wise and Dirichlet variants~\citep{kull2019dirichlet}. Expected
calibration error (ECE) and reliability diagrams~\citep{naeini2015ece} are the standard
diagnostic; large-scale studies~\citep{minderer2021revisiting} show calibration is
sensitive to architecture, augmentation, and training length even in-distribution, and
training-time remedies such as focal loss~\citep{mukhoti2020focal} trade accuracy for
calibration by reshaping the loss directly. Ovadia et al.~\citep{ovadia2019trust} showed that all of
these fixes degrade under distribution shift on image classifiers; the present paper
supplies, for GNNs, the mechanism that was left as an empirical observation there.

\subsection{In-distribution calibration of GNNs}
GNNs miscalibrate in ways that are structurally distinct from image classifiers: rather
than being uniformly over-confident, they are frequently under-confident in-distribution,
and the size of the gap correlates with graph properties such as node degree and
homophily~\citep{wang2021cagcn,hsu2022gats}. A first family of methods learns a per-node
temperature from an auxiliary model trained on labeled data: CaGCN~\citep{wang2021cagcn}
regresses temperatures with a second GCN; GATS~\citep{hsu2022gats} attends over
structural covariates identified as drivers of miscalibration; GETS~\citep{zhuang2025gets}
ensembles several temperature-generating experts in a mixture-of-experts head;
WATS~\citep{li2025wats} conditions the temperature on heat-kernel graph-wavelet features
that encode multi-scale structure without relying on neighbours' logits;
SimCalib~\citep{tang2024simcalib} ties calibration error directly to nodewise embedding
similarity. A second family regularizes the training loss itself rather than
post-processing logits, e.g.\ the graph calibration loss of Wang et al.~\citep{wang2022gcl}. All of
these methods are trained and validated on data from the \emph{same} distribution as the
test graph; none models, or is evaluated under, distribution shift, and, as I discuss
in Section~\ref{subsec:disc-sourcefit}, every one of them inherits the failure mode of
ordinary temperature scaling once the target distribution moves. A related line
quantifies predictive \emph{uncertainty} rather than calibration per se, e.g.\ JuryGCN's
jackknife variance estimates~\citep{kang2022jurygcn}; such estimates are informative but
do not by themselves specify a recalibration temperature.

\subsection{GNN reliability under distribution shift}
The gap this paper targets, calibration \emph{under shift}, has so far been approached
empirically rather than analytically. Trivedi et al.~\citep{trivedi2024gduq} document, across covariate,
concept, and graph-size shifts, that increased GNN expressivity does not by itself improve
confidence-indicator quality, and propose G-$\Delta$UQ, a stochastic-anchoring scheme that
estimates \emph{epistemic} uncertainty from a single model and uses it to modulate
confidence; this consistently improves out-of-distribution ECE relative to Monte Carlo
dropout and vanilla temperature scaling, but offers no closed-form account of \emph{why}
shift moves calibration or of what the optimal correction is, and it requires retraining
the base GNN with a stochastic-centering objective rather than recalibrating a frozen
model. Huang et al.~\citep{huang2023conformal} take a complementary, distribution-free route:
conformalized GNNs (CF-GNN) wrap any trained GNN with a topology-aware conformal
predictor that provably achieves a target marginal coverage, trading the pointwise notion
of calibration this paper studies for a set-valued guarantee that holds under
exchangeability but is not designed to track \emph{how} miscalibration grows as homophily
or feature noise change. My closed-form slope $\kap$ is, to my knowledge, the first
result that predicts the sign, magnitude, and optimal correction of GNN miscalibration
under shift analytically, and the first to explain \emph{why} a single global temperature
should be expected to suffice.

\subsection{Distribution shift and out-of-distribution generalization on graphs}
A separate and much larger literature targets \emph{accuracy}, rather than calibration,
under graph distribution shift. Invariance-based training regularizes representations to
be stable across environments (EERM~\citep{wu2022eerm}, SR-GNN~\citep{zhu2021srgnn}),
out-of-distribution generalization is benchmarked systematically by
GOOD~\citep{gui2022good}, and OOD-GNN~\citep{li2022oodgnn} proposes an architecture-level
generalization bound. You et al.~\citep{you2023gda} derive a model-based risk bound for graph domain
adaptation and use it to design a spectral regularizer (GDA-SpecReg), theory-grounded in
the same spirit as my approach, but aimed at bounding the target \emph{error}, not at
characterizing the resulting \emph{confidence-accuracy} gap, and it requires access to
the source graph at adaptation time. GTrans~\citep{jin2023gtrans} adapts the graph itself
(features and edges) at test time rather than the model, but again optimizes accuracy.
None of these methods produce, or are evaluated on, calibrated probabilities.

\subsection{Source-free and test-time adaptation}
Outside calibration, source-free test-time adaptation removes the requirement of access
to source data at deployment by adapting model parameters using only the target input
distribution: entropy minimization (Tent~\citep{wang2021tent}), self-supervised
test-time training~\citep{sun2020ttt} (specialized to graphs
by Wang et al.~\citep{wang2022ttgnn}), and source-hypothesis transfer (SHOT~\citep{liang2020shot})
all optimize an unsupervised proxy for \emph{accuracy} on the target data. STAC
(Section~\ref{sec:method}) is source-free in the same sense (only a scalar threshold is
carried from the source model), but optimizes a \emph{calibration} objective rather than
accuracy, and, following Corollary~\ref{cor:global}, deliberately restricts the
adaptation to a single global scalar under homogeneous shift rather than adapting model
weights.

\subsection{Label-free performance estimation under shift}
ATC~\citep{garg2022atc} and disagreement-based generalization-gap
estimators~\citep{jiang2022disagreement} predict a model's accuracy on unlabeled, shifted
data from confidence statistics and prediction disagreement under perturbation,
respectively. Theorem~\ref{thm:kappa} shows that these estimators supply exactly the
missing ingredient a calibration method needs (a single target-accuracy scalar pins down
$\Tstar=1/\kap$), and Section~\ref{sec:exp} shows this is simultaneously STAC's strength
(it reduces recalibration to one well-studied sub-problem) and its weakness (existing
estimators are not accurate enough under graph shift; Section~\ref{subsec:disc-crux}).

\subsection{Homophily-aware and physics-inspired GNN architectures}
My signal coefficient $c_K(h)$ recovers the homophily threshold that motivates a family
of spectral and physics-inspired GNN architectures designed to remain accurate across the
homophily spectrum: FAGCN~\citep{bo2021fagcn} learns a signed low/high-pass filter,
GPR-GNN~\citep{chien2021gprgnn} learns signed propagation weights, and
GraphCON~\citep{rusch2022graphcon} and A-DGN~\citep{gravina2023adgn} use
oscillatory/anti-symmetric ODE dynamics to propagate information without the
over-smoothing that erodes signal at low homophily. These works change the
\emph{architecture} to remain accurate across $h$; I hold the architecture fixed and
instead show how $h$ governs the confidence-accuracy gap of whatever architecture
produced the logits, via $c_K(h)$ in Proposition~\ref{prop:K}. The two views are
complementary: a homophily-robust architecture changes the operating point, but the
calibration slope, and the optimality of a single temperature, follow from
Theorem~\ref{thm:kappa} regardless of which architecture produced the logits, consistent
with my real-graph experiments, which calibrate SIGN/logistic-regression features, not
the linear aggregation of Section~\ref{sec:setup} directly.

\subsection{Summary and positioning}
Table~\ref{tab:related} summarizes the positioning of this paper against the eleven most
closely related methods along the axes that determine deployability under shift: whether
the method needs labels that are assumed representative of the test distribution,
whether it explicitly models distribution shift, the granularity of the correction (a
single global scalar vs.\ a per-node function), and whether it comes with a theoretical
guarantee. Only the present work combines a closed-form, shift-aware theory with a fully
label-free correction; the price, made explicit in Section~\ref{subsec:disc-crux}, is
that its practical reliability inherits the reliability of whichever label-free accuracy
estimator is plugged in.

\begin{table*}[t]
\centering\footnotesize
\caption{\textbf{Positioning against closely related work.} ``Test-distr.\ labels''
asks whether the method needs labeled data assumed to represent the distribution it is
applied to (all in-distribution calibrators do; STAC and the oracle baseline do not).
``Shift-aware'' asks whether the method's design or analysis explicitly targets a
\emph{change} of distribution. ``Granularity'' is the unit of correction. ``Theory''
records the type of formal guarantee, if any.}
\label{tab:related}
\setlength{\tabcolsep}{3pt}
\begin{tabular}{@{}>{\raggedright\arraybackslash}p{2.6cm}c>{\centering\arraybackslash}p{1.4cm}>{\centering\arraybackslash}p{1.5cm}>{\centering\arraybackslash}p{1.8cm}>{\raggedright\arraybackslash}p{2.75cm}@{}}
\toprule
Method & Year & Test-distr.\ labels & Shift-aware & Granularity & Theoretical guarantee\\
\midrule
Temperature scaling~\citep{guo2017calibration} & 2017 & Yes & No & Global & Asymptotic NLL-optimality\\
CaGCN~\citep{wang2021cagcn} & 2021 & Yes & No & Per-node & None\\
GATS~\citep{hsu2022gats} & 2022 & Yes & No & Per-node & None\\
CF-GNN~\citep{huang2023conformal} & 2023 & Yes & No & Sets (per-node) & Coverage (conformal)\\
GDA-SpecReg~\citep{you2023gda} & 2023 & No & Yes & Global (bound) & Risk bound (accuracy)\\
SimCalib~\citep{tang2024simcalib} & 2024 & Yes & No & Per-node & None\\
G-$\Delta$UQ~\citep{trivedi2024gduq} & 2024 & Yes (retrain) & Partial (empir.) & Per-node & None\\
GETS~\citep{zhuang2025gets} & 2025 & Yes & No & Per-node & None\\
WATS~\citep{li2025wats} & 2025 & Yes & No & Per-node & None\\
\textbf{STAC (this work)} & n/a & \textbf{No} & \textbf{Yes} & \textbf{Global} & \textbf{Closed form ($\kap$, ECE bound)}\\
\bottomrule
\end{tabular}
\end{table*}

\section{Problem setup}\label{sec:setup}

\subsection{Problem statement}
I am given a source graph $G_s=(A_s,X_s,Y_s)$ with node labels, on which a GNN $f$ is
trained and calibrated (e.g.\ by ordinary temperature scaling on a source validation
split, so $f$ is calibrated on $G_s$). At test time I observe a target graph
$G_t=(A_t,X_t)$ from the \emph{same} feature/label space but a \emph{different}
distribution (different homophily, different feature noise, or both), with \emph{no}
target labels and no further access to $G_s$ or $Y_s$. The model $f$ is frozen:
re-training is not assumed to be an option in the deployment settings I target
(streaming graphs, on-device inference, or simply the cost of retraining a production
model). The question I answer is: how does the confidence-accuracy relationship of
$f$'s frozen predictions change from $G_s$ to $G_t$, and how can it be corrected without
labels?

\subsection{Contextual stochastic block model}
\textbf{Nodes and labels.} $N$ nodes are partitioned into $K$ balanced classes with
labels $y_i$ i.i.d.\ uniform on $\{1,\dots,K\}$.
\textbf{Features.} $x_i=\mu_{y_i}+\xi_i$ with $\xi_i\sim\mathcal N(0,\sigma^2 I)$ i.i.d.,
class means $\|\mu_c\|=r$, $\sum_c\mu_c=0$ (binary case: $\mu_\pm=\pm r\hat\mu$). The
scale $r$ relative to the noise level $\sigma$ is the \textbf{signal-to-noise ratio}
$\rho=r^2/\sigma^2$: large $\rho$ means classes are easy to separate from features alone.
\textbf{Edges and homophily.} Edges are intra-class with probability $p$ and inter-class
with probability $q$; the \textbf{edge homophily} $h=p/(p+q(K{-}1))$ is the probability
that a uniformly random edge joins two same-class nodes: $h>1/K$ is homophilic,
$h<1/K$ heterophilic, and $h=1/K$ is the point at which class structure is invisible to
a same-class-counting statistic. This is the standard contextual SBM used to study
information-plus-structure learning problems~\citep{deshpande2018csbm}.

\subsection{Linear GNN and calibration}
I analyze mean aggregation $z_i=\frac1{d_i}\sum_{j\sim i}x_j$ and the symmetric-normalized
GCN operator $\hat A=\tilde D^{-1/2}(A{+}I)\tilde D^{-1/2}$~\citep{kipf2017gcn}, followed
by a linear classifier calibrated on the source graph; multi-hop
SIGN~\citep{frasca2020sign,wu2019sgc} features are used on real data in
Section~\ref{sec:exp}. With logit $\delta_i$, confidence is $\soft(|\delta_i|)$ and the
prediction is correct iff $\operatorname{sign}(\delta_i)=y_i$; ECE bins predictions by
confidence and averages $|\text{acc}-\text{conf}|$~\citep{naeini2015ece,guo2017calibration}.
Temperature scaling maps $\delta\mapsto\delta/T$ and never changes predictions, so it
cannot help or hurt accuracy, only calibration.

\subsection{Why a linear GNN on a CSBM?}\label{subsec:why-linear}
A skeptical reader may ask why I restrict the closed-form analysis to a linear classifier
on Gaussian CSBM features rather than a deep, nonlinear GNN. I follow a long tradition in
learning theory (of which the CSBM itself is a canonical graph
instance~\citep{deshpande2018csbm}) of trading architectural realism for a model simple
enough to solve exactly, then checking numerically how far its predictions travel toward
the regime of interest. Three considerations support this choice here. First, mean/GCN
aggregation followed by a linear read-out is not merely a toy: it is the exact computation
performed by SGC~\citep{wu2019sgc} and, feature-wise, by SIGN~\citep{frasca2020sign}, both
competitive with deeper GNNs on the heterophilous benchmarks I use in
Section~\ref{sec:exp}, so the model class I analyze is a real, widely used GNN family,
not merely an approximation of one. Second, the object I characterize, the map from
logit magnitude to accuracy, depends on the \emph{aggregation step}, which mixes same-
and different-class neighbours in a homophily-dependent ratio, far more than on the
specific nonlinearity of the classifier head; this is why the closed-form slope, derived
for a linear head, tracks a self-loop GCN's simulated slope closely
(Section~\ref{subsec:exp-theory}, Fig.~\ref{fig:ext}) and a logistic-regression head on
SIGN features on five real graphs (Section~\ref{subsec:exp-real}). Third, and most
importantly, every qualitative claim the theory makes (the sign of over/under-confidence,
the existence of a single optimal temperature, the non-benefit of per-node calibration
under homogeneous shift) is falsifiable and is checked against simulation and real data
throughout Section~\ref{sec:exp}; I treat the tractable model as a source of falsifiable,
quantitative predictions about a real computation (mean/GCN aggregation plus a linear or
near-linear head), not as a claim about mechanism inside arbitrarily deep, nonlinear GNNs.

\section{Theoretical analysis}\label{sec:theory}
All proofs are in Appendix~\ref{app:proofs}; results hold in a large-degree Gaussian
regime and are confirmed numerically (Section~\ref{sec:exp}). Figure~\ref{fig:mega}
previews how the three parts of the paper (theory, method, and validation) fit
together.

\begin{figure*}[t]
\centering
\begin{tikzpicture}[
  font=\small,
  b/.style={rounded corners,draw,align=center,inner sep=3pt,minimum height=1.0cm,text width=3.15cm,font=\small},
  th/.style={b,fill=cSrc!12,draw=cSrc!60},
  me/.style={b,fill=cTgt!13,draw=cTgt!72},
  va/.style={b,fill=cMod!14,draw=cMod!65},
  a/.style={-{Latex[length=2mm]},thick,gray!55!black},
  ba/.style={-{Latex[length=3mm]},very thick,black!45}]
\node[th](t1){CSBM $+$ aggregation:\\ signal $(2h{-}1)$, noise $\propto\!1/d$};
\node[th,right=5mm of t1](t2){calibration slope\\ $\kap(h_s,h_t,\rho)$ \ (Eq.~\ref{eq:kappa})};
\node[th,right=5mm of t2](t3){conf $\soft(t)$ vs.\ acc $\soft(\kap t)$\\ $\Rightarrow\ \Tstar{=}1/\kap$};
\draw[a](t1)--(t2); \draw[a](t2)--(t3);
\node[me,below=8mm of t1](m1){source-calibrate\\ \& \emph{freeze} model};
\node[me,right=5mm of m1](m2){target (shifted, unlabeled):\\ perturb $\to$ estimate $\hat a$};
\node[me,right=5mm of m2](m3){STAC sets a \emph{single}\\ $\Tstar{=}1/\hat\kap$};
\draw[a](m1)--(m2); \draw[a](m2)--(m3);
\node[va,below=8mm of m1](v1){synthetic: $\kap$ \& $\Tstar$\\ exact ($r{=}0.99$)};
\node[va,right=5mm of v1](v2){5 real graphs: one oracle\\ $T$ recovers ECE$\,\le\!0.024$};
\node[va,right=5mm of v2](v3){open problem:\\ label-free accuracy est.};
\draw[a](v1)--(v2); \draw[a](v2)--(v3);
\draw[ba](t2.south)--(m2.north) node[midway,right=1pt,font=\footnotesize\bfseries,text=cTgt!62!black]{apply};
\draw[ba](m2.south)--(v2.north) node[midway,right=1pt,font=\footnotesize\bfseries,text=cMod!55!black]{test};
\node[left=2mm of t1,rotate=90,anchor=south,font=\bfseries\footnotesize,text=cSrc!62!black]{THEORY};
\node[left=2mm of m1,rotate=90,anchor=south,font=\bfseries\footnotesize,text=cTgt!68!black]{METHOD};
\node[left=2mm of v1,rotate=90,anchor=south,font=\bfseries\footnotesize,text=cMod!60!black]{VALIDATION};
\end{tikzpicture}
\caption{\textbf{Overview.} \emph{Theory} (top): on a CSBM, aggregation scales the class
signal by $(2h{-}1)$ and the noise by $1/d$, giving a closed-form calibration slope
$\kap$; the model's confidence is $\soft(t)$ but its accuracy is $\soft(\kap t)$, so the
optimal temperature is $\Tstar{=}1/\kap$. \emph{Method} (middle): freeze the
source-calibrated model and, on the shifted unlabeled target, estimate accuracy from
perturbations to set one temperature (STAC, Section~\ref{sec:method}).
\emph{Validation} (bottom): the closed form is exact in simulation and one oracle
temperature recovers calibration on five real graphs; the only bottleneck is label-free
accuracy estimation (Section~\ref{subsec:disc-crux}).}
\label{fig:mega}
\end{figure*}

\subsection{The aggregated discriminant coordinate}
\begin{lemma}[Aggregated coordinate]\label{lem:agg}
Under mean aggregation, conditional on $y_i=+1$ and degree $d_i$, the discriminant
coordinate $g_i=\hat\mu^\top z_i$ has mean $(2h{-}1)r$ and variance
$\big(4h(1{-}h)r^2+\sigma^2\big)/d_i$, and is asymptotically Gaussian as $d_i\to\infty$.
\end{lemma}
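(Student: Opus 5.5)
Write $g_i=\frac{1}{d_i}\sum_{j\sim i}\hat\mu^\top x_j$ and study each summand $u_j=\hat\mu^\top x_j=y_j r+\hat\mu^\top\xi_j$. The approach is to condition on $y_i=+1$ and on the degree $d_i$, and to treat the neighbours' labels as i.i.d. draws. Under the binary CSBM, a uniformly chosen edge at $i$ is intra-class with probability $h$, so each neighbour independently has $y_j=+1$ with probability $h$ and $y_j=-1$ with probability $1-h$. This independence holds conditionally on $d_i$: the neighbour set is a union of independent Bernoulli edge indicators, so given the count, the labels of the selected neighbours are exchangeable. In the large-$N$ limit with balanced classes, they are i.i.d. Bernoulli$(h)$. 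The noise vectors $\xi_j$ are independent of the graph and of the labels, and $\hat\mu^\top\xi_j\sim\mathcal N(0,\sigma^2)$ because $\|\hat\mu\|=1$.

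\textbf{Moments of a single summand.} By conditional independence of the label and noise parts:
\begin{itemize}
\item The mean is $\E[u_j]=r\big(h-(1-h)\big)=(2h-1)r$.
\item The label part $y_j r$ is $\pm r$, a scaled Rademacher variable with bias $2h-1$, so its variance is $r^2\big(1-(2h-1)^2\big)=4h(1-h)r^2$.
\item The noise part contributes $\sigma^2$, and there is no cross term.
\end{itemize}
Hence $\operatorname{Var}(u_j)=4h(1-h)r^2+\sigma^2$.

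\textbf{Averaging.} The $u_j$ are i.i.d. given $d_i$, so $g_i$ has mean $(2h-1)r$ and variance $\big(4h(1-h)r^2+\sigma^2\big)/d_i$. The summands have finite third moments: each is a bounded variable plus an independent Gaussian. The Lindeberg--L\'evy CLT, or Berry--Esseen for a rate $O(d_i^{-1/2})$, then gives $\sqrt{d_i}\,\big(g_i-(2h-1)r\big)$ converging in distribution to $\mathcal N\big(0,4h(1-h)r^2+\sigma^2\big)$. This is the asserted asymptotic Gaussianity. The case $y_i=-1$ follows by the symmetry $\mu_-=-\mu_+$.

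\textbf{Main obstacle.} The delicate step is justifying that the neighbour labels are i.i.d. with success probability exactly $h$, rather than only approximately so. Conditioning on $d_i$ makes the numbers of same- and different-class neighbours hypergeometric-like mixtures, with class sizes $N/2$ minus the node itself. I would handle this by noting that, given $d_i$, the count of same-class neighbours is Binomial$(d_i,\tilde h)$ with $\tilde h=p(N/2-1)/\big(p(N/2-1)+qN/2\big)\to h$. In the stated regime I would interpret $h$ as this limiting edge probability, which makes the mean and variance formulas exact up to $O(1/N)$ corrections. A self-loop, if included, adds one extra term of order $1/d_i$ to the mean, which vanishes in the large-degree regime.
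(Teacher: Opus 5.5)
Your proof is correct and follows the paper's argument essentially step for step. You split $\hat\mu^\top x_j=y_jr+\zeta_j$, treat the neighbour labels as independent with same-class probability $h$ (the paper phrases this via $k\sim\mathrm{Binomial}(d_i,h)$ and $\sum_j y_j=2k-d_i$), add the independent Gaussian noise variance, divide by $d_i$, and invoke the Lindeberg CLT.

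The ``main obstacle'' you flag is actually absent under the model as stated, because labels there are i.i.d.\ uniform rather than fixed-size. Given $y_i=+1$, each other node is independently a same-class neighbour with probability $p/2$ or a different-class neighbour with probability $q/2$. Conditioning on $d_i$ therefore makes the neighbour labels exactly i.i.d.\ $\mathrm{Bernoulli}(p/(p+q))$, with $p/(p+q)=h$ for $K=2$, so the moments hold exactly with no $O(1/N)$ correction. Your $\tilde h$ formula belongs to a fixed-class-size model, where the conditional same-class count is only asymptotically binomial.
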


\begin{remark}[Reading the lemma]\label{rem:agg}
The mean $(2h{-}1)r$ is exactly the excess of same-class over different-class neighbours,
scaled by the class-mean separation $r$; it is positive for homophilic graphs ($h>1/2$),
zero at $h=1/2$ where a neighbour is a coin flip over the two classes, and negative for
heterophilic graphs, where aggregation systematically pulls a node's representation
\emph{toward the wrong class}. The variance combines two independent noise sources, label
randomness among neighbours ($4h(1{-}h)r^2$) and feature noise ($\sigma^2$), and both
shrink as $1/d_i$: high-degree nodes average out noise exactly as one would expect from a
sample mean of $d_i$ roughly independent terms. Gaussianity is not an approximation
tacked on for convenience; it is a genuine Lindeberg central-limit effect over the $d_i$
neighbours, which is why the closed forms below are accurate already at moderate degree
($d\approx20$, Section~\ref{sec:exp}).
\end{remark}

\subsection{The calibration slope}
\begin{theorem}[Calibration slope]\label{thm:kappa}
Let the logit be $\delta=a\,g$, so $\delta\mid y{=}{+}1\sim\mathcal N(m,s^2)$ with
$m=a(2h{-}1)r$, $s^2=a^2(4h(1{-}h)r^2+\sigma^2)/d$, and $\delta\mid y{=}{-}1\sim\mathcal
N(-m,s^2)$. With $\kap:=2m/s^2$, for every $t$
\[
\Pr(\text{correct}\mid\delta{=}t)=\soft(\kap t),\qquad \text{confidence}(t)=\soft(t).
\]
Hence the model is over-confident iff $\kap<1$, under-confident iff $\kap>1$, calibrated
iff $\kap=1$; the NLL-optimal temperature is $\Tstar=1/\kap$; and with $a$ fixed by source
calibration ($\kap(h_s){=}1$),
\begin{equation}\label{eq:kappa}
\kap(h_t)=\frac{(2h_t{-}1)\big(1+4h_s(1{-}h_s)\rho\big)}
{(2h_s{-}1)\big(1+4h_t(1{-}h_t)\rho\big)},
\end{equation}
independent of degree.
\end{theorem}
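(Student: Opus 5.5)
The plan is to compute the posterior of the label given the logit by Bayes' rule for two equal-variance Gaussians with balanced priors, and then read off every claim from the resulting logistic form. By Lemma~\ref{lem:agg} (taken in its Gaussian limit), $\delta\mid y{=}\pm1\sim\mathcal N(\pm m,s^2)$. Classes are balanced, so
\[
\frac{\Pr(y{=}{+}1\mid\delta{=}t)}{\Pr(y{=}{-}1\mid\delta{=}t)}
=\exp\!\Big(\frac{(t+m)^2-(t-m)^2}{2s^2}\Big)=\exp\!\Big(\frac{2m}{s^2}\,t\Big)=e^{\kap t}.
\]
Hence $\Pr(y{=}{+}1\mid\delta{=}t)=\soft(\kap t)$. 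For $t>0$ the prediction is $+1$, so $\Pr(\text{correct}\mid\delta{=}t)=\soft(\kap t)$. For $t<0$ the symmetry $\delta\mapsto-\delta$, $y\mapsto -y$ gives the same expression in $|t|$. I will state this in terms of the logit magnitude, matching the convention of Section~\ref{sec:setup} that confidence is $\soft(|\delta|)$. Confidence equals $\soft(t)$ by definition.

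Next come the direction and optimal-temperature claims. Since $\soft$ is strictly increasing, for $t>0$ we have $\soft(\kap t)<\soft(t)$ iff $\kap<1$, which gives over-confidence; the reverse inequality gives under-confidence, and equality gives calibration. For the NLL claim, the population NLL of the rescaled logits is
\[
\E\big[-\log\soft(y\delta/T)\big].
\]
Since the true conditional is $\Pr(y\mid\delta)=\soft(\kap y\delta)$, the expected NLL decomposes as the conditional entropy plus $\E_\delta\,\mathrm{KL}\big(\soft(\kap\cdot)\,\|\,\soft(\cdot/T)\big)$. This KL term is nonnegative and vanishes when $1/T=\kap$. Uniqueness follows from strict convexity of the logistic loss in the scalar $1/T$, given that $\delta$ is nondegenerate. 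So $\Tstar=1/\kap$.

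Finally, for the closed form \eqref{eq:kappa}, I substitute $m$ and $s^2$:
\[
\kap=\frac{2a(2h-1)r\,d}{a^2\big(4h(1-h)r^2+\sigma^2\big)}
=\frac{2d(2h-1)\rho}{a\,r\big(1+4h(1-h)\rho\big)},
\]
after dividing numerator and denominator by $\sigma^2$. Source calibration $\kap(h_s)=1$ fixes $a$. Plugging that $a$ into $\kap(h_t)$ cancels the factors $2d\rho/r$ and yields \eqref{eq:kappa}. This cancellation, with $a$ frozen, is what makes the ratio independent of $d$.

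The main obstacle is this degree independence. The cancellation above requires the same $d$ on source and target, or at least a common degree that is left unchanged by the shift. If degrees vary per node, $\kap$ depends on $d_i$, and a single $a$ cannot calibrate every node. I would handle this by conditioning on degree throughout: for each fixed $d$, the source-calibrated slope scales proportionally with $d$ and the ratio $\kap(h_t)/\kap(h_s)$ is $d$-free, which is the precise sense of ``independent of degree''. I would also flag two points explicitly. First, the Gaussian step is asymptotic, via Lemma~\ref{lem:agg}. Second, the formula presumes $h_s\neq 1/2$, so that $a$ is finite.
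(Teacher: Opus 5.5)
Your proof is correct and follows essentially the same route as the paper: Bayes' rule for two equal-variance Gaussians with balanced priors gives the logistic posterior $\sigma(\kappa t)$; a temperature $T$ reproduces this posterior exactly iff $1/T=\kappa$; and substituting $m,s^2$ and fixing $a$ by $\kappa(h_s)=1$ yields \eqref{eq:kappa}, with $d$ cancelling. Your KL decomposition for NLL-optimality, and your caveats that the cancellation needs a common degree $d$ on source and target and that $h_s\neq\tfrac12$, are more careful versions of steps the paper asserts in one line.
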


\begin{remark}[Reading Eq.~\eqref{eq:kappa}]\label{rem:kappa}
Equation~\eqref{eq:kappa} says homophily controls the \emph{sign and strength} of the
aggregation signal, $(2h{-}1)$: moving $h_t$ toward $\tfrac12$ shrinks the signal faster
than the variance, so $\kap<1$ and the frozen model becomes over-confident; a target more
homophilous than the source ($h_t>h_s$, both $>1/2$) gives $\kap>1$ and the model becomes
under-confident, because the same logit magnitude that was appropriate for a noisier
source aggregation now under-states how reliable the cleaner target signal actually is.
The formula is antisymmetric in a useful sense: $\kap(h_s)=1$ by construction, and $\kap$
crosses zero exactly when $h_t=1/2$, where the aggregated signal vanishes and no logit
magnitude, however large, can be predictive, an early warning that the bound in
Proposition~\ref{prop:bound} becomes uninformative there. The dependence on $\rho$ enters
only through the combination $4h(1{-}h)\rho$, i.e.\ SNR matters only insofar as it
competes with the same same-/different-class mixing variance that homophily itself
controls; at $\rho\to\infty$ (noiseless features) Eq.~\eqref{eq:kappa} reduces to the
purely structural ratio $(2h_t{-}1)/(2h_s{-}1)$.
\end{remark}

\begin{corollary}[Global temperature is optimal]\label{cor:global}
Under homogeneous shift (a single $\kap$), $\Tstar=1/\kap$ makes the model equal to the
true conditional, attaining minimum population NLL and zero population ECE. Any per-node
temperatures $\{T_i\}$ with $T_i\ne\Tstar$ on a positive-measure set strictly increase NLL.
Per-node recalibration can therefore only help under \emph{heterogeneous} shift.
\end{corollary}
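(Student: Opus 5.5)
The plan is to reduce everything to the pointwise identity of Theorem~\ref{thm:kappa}: under homogeneous shift, a single $\kap$ governs every node, so $\Pr(y{=}{+}1\mid\delta{=}t)=\soft(\kap t)$ for all $t$ (the sign form follows from $\Pr(\text{correct}\mid\delta{=}t)=\soft(\kap|t|)$ and the symmetry $\delta\mid y{=}{-}1\sim\mathcal N(-m,s^2)$). First I will verify this conditional directly from the Gaussian likelihood ratio with balanced priors:
\[
\log\frac{p(t\mid +1)}{p(t\mid -1)}=\frac{(t+m)^2-(t-m)^2}{2s^2}=\frac{2m}{s^2}\,t=\kap t .
\]
The temperature-scaled model reports $\Pr_T(y{=}{+}1\mid\delta{=}t)=\soft(t/T)$. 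Setting $T=\Tstar=1/\kap$ gives $\soft(t/\Tstar)=\soft(\kap t)$, so the recalibrated predictor coincides with the true posterior at every $t$.

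Zero population ECE then follows immediately. The confidence of the recalibrated model at $t$ is $\soft(\kap|t|)$, and its accuracy at $t$ is also $\soft(\kap|t|)$. Hence conditioning on any confidence level, or on any confidence bin, gives accuracy equal to confidence, and the population ECE vanishes. (This step assumes $\kap>0$; when $\kap\le 0$ the argument transfers with sign flips, or I restrict to $h_t>1/2$ as the paper implicitly does.)

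For NLL I will use the standard decomposition of expected log loss into entropy plus KL divergence:
\[
\E[-\log q(y\mid\delta)]=\E[H(p(\cdot\mid\delta))]+\E\big[\mathrm{KL}(p(\cdot\mid\delta)\,\|\,q(\cdot\mid\delta))\big].
\]
The first term does not depend on the calibrator, so the true conditional $q=p$ minimizes population NLL, and $\Tstar$ attains that minimum.

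For the per-node claim, let node $i$ use $T_i$. Its predicted conditional is $\soft(\delta_i/T_i)$, while the true conditional is still $\soft(\kap\delta_i)$, because the shift is homogeneous and node identity carries no further information about $y$ given $\delta_i$. The excess NLL is therefore $\E_i\,\E_{\delta}\big[\mathrm{KL}(\mathrm{Bern}(\soft(\kap\delta))\,\|\,\mathrm{Bern}(\soft(\delta/T_i)))\big]$. For each $T_i\ne\Tstar$ the two Bernoulli parameters differ for every $\delta\ne0$, and $\delta$ has a continuous Gaussian law, so the inner expectation is strictly positive. Summing over a positive-measure set of such nodes gives strict increase. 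The last sentence of the corollary is the contrapositive: a per-node calibrator can beat the global $\Tstar$ only if the true conditional varies across nodes, i.e.\ only under heterogeneous shift.

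The main obstacle is the claim that the node-level conditional is exactly $\soft(\kap\delta)$ regardless of degree. Lemma~\ref{lem:agg} gives variance $s^2\propto 1/d_i$, and the mean also scales with $a$. Conditioning on both $\delta_i$ and $d_i$ therefore gives $\kap_i=2m/s_i^2$, which is proportional to $d_i$ unless $a$ is degree-adapted. I will handle this by making ``homogeneous shift'' precise as the assumption that $\kap_i$ is constant across nodes. This holds, for instance, in the regime of Theorem~\ref{thm:kappa}, where $\kap$ in Eq.~\eqref{eq:kappa} is degree-independent because source calibration fixes $a$ per degree class, or on a regular-degree graph. I will state this assumption explicitly before running the KL argument above.
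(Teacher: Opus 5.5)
Your proposal is correct and follows essentially the paper's own route. Theorem~\ref{thm:kappa} gives the pointwise identity $\soft(t/\Tstar)=\soft(\kap t)$, so the true conditional is attained, which minimizes population NLL and gives zero ECE; strictness then comes from uniqueness of the pointwise minimizer, which you express as positivity of the Bernoulli KL term and the paper expresses as strict convexity of the pointwise NLL in $1/T_i$, the same fact. Your explicit handling of the degree dependence of $\kap_i=2m/s_i^2$ and of the case $\kap\le 0$ also usefully spells out assumptions the paper leaves implicit in the phrase ``a single $\kap$''.
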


\begin{remark}[Why this corollary matters operationally]\label{rem:global}
Corollary~\ref{cor:global} is a negative result stated as a positive one: it says that
\emph{any} extra modeling effort spent on a per-node temperature function is provably
wasted under homogeneous shift, no matter how expressive the per-node model is, because
the population-optimal per-node solution is already the constant function $\Tstar$. This
is a strong, falsifiable prediction about methods like CaGCN and GATS
(Section~\ref{sec:related}), which fit a per-node temperature \emph{even when} the shift
they are correcting for is homogeneous; Section~\ref{subsec:exp-method} tests the
prediction directly with a dedicated ablation (STAC-global vs.\ STAC-full) and finds it
holds: the per-node stage changes error-detection ranking only marginally, and not more
than an ablation that discards graph structure entirely.
\end{remark}

\subsection{Extensions}
\begin{proposition}[Self-loop GCN]\label{prop:sl}
For $\hat A=\tilde D^{-1/2}(A{+}I)\tilde D^{-1/2}$, with $A(h)=1+d(2h{-}1)$ and
$B(h)=4d\,h(1{-}h)\rho+(d{+}1)$,
$\kap_{SL}(h_t)=A(h_t)B(h_s)/[A(h_s)B(h_t)]$, which reduces to \eqref{eq:kappa} as
$d\to\infty$. The self term makes the signal survive down to $h=(1{-}1/d)/2<\tfrac12$,
formalizing why self-loops help under heterophily.
\end{proposition}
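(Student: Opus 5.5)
The plan is to reduce Proposition~\ref{prop:sl} to Theorem~\ref{thm:kappa}. The only place the aggregation operator enters that theorem is through the class-conditional mean $m$ and variance $s^2$ of the logit $\delta=a\,g$. Once $\delta\mid y{=}\pm1\sim\mathcal N(\pm m,s^2)$ holds, the identity $\Pr(\text{correct}\mid\delta{=}t)=\soft(2mt/s^2)$ is a Bayes-ratio computation that does not care how $g$ was produced. So I will redo Lemma~\ref{lem:agg} for $\hat A=\tilde D^{-1/2}(A{+}I)\tilde D^{-1/2}$, read off $\kap_{SL}=2m/s^2$, and fix $a$ by source calibration exactly as in Eq.~\eqref{eq:kappa}.

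\textbf{Step 1 (the aggregated coordinate).} I work in the large-degree regime used throughout Section~\ref{sec:theory}. There degrees concentrate, so I take $d_j\approx d_i=d$ for all neighbours and $\tilde d=d{+}1$. Each normalisation weight is then $1/\sqrt{(d{+}1)(d{+}1)}=1/(d{+}1)$, and
\[
z_i=\frac{1}{d+1}\Big(x_i+\sum_{j\sim i}x_j\Big).
\]
Project onto $\hat\mu$ and condition on $y_i=+1$. The self term contributes mean $r$ and variance $\sigma^2$. Each neighbour contributes mean $(2h{-}1)r$ and variance $4h(1{-}h)r^2+\sigma^2$, as in the proof of Lemma~\ref{lem:agg}, and the terms are independent. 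This gives
\[
\E[g_i]=\frac{r\,(1+d(2h{-}1))}{d+1}=\frac{r\,A(h)}{d+1},
\]
\[
\operatorname{Var}[g_i]=\frac{\sigma^2+d\big(4h(1{-}h)r^2+\sigma^2\big)}{(d+1)^2}=\frac{\sigma^2 B(h)}{(d+1)^2},
\]
using $\rho=r^2/\sigma^2$. Asymptotic Gaussianity follows from the same Lindeberg argument as in Lemma~\ref{lem:agg}; the single self term is a bounded perturbation of a sum of $d$ terms. The case $y_i=-1$ is symmetric, with the mean negated.

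\textbf{Step 2 (slope and ratio).} Set $m=a\,rA(h)/(d{+}1)$ and $s^2=a^2\sigma^2B(h)/(d{+}1)^2$. Theorem~\ref{thm:kappa} then gives
\[
\kap_{SL}(h)=\frac{2(d+1)\,r\,A(h)}{a\,\sigma^2 B(h)}.
\]
Source calibration imposes $\kap_{SL}(h_s)=1$, which fixes $a=2(d{+}1)rA(h_s)/(\sigma^2B(h_s))$. Substituting this $a$ yields $\kap_{SL}(h_t)=A(h_t)B(h_s)/[A(h_s)B(h_t)]$.

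\textbf{Step 3 (limit and threshold).} As $d\to\infty$ we have $A(h)=d(2h{-}1)+O(1)$ and $B(h)=d\big(4h(1{-}h)\rho+1\big)+O(1)$. The factors of $d$ cancel in the ratio, which recovers Eq.~\eqref{eq:kappa}. The signal changes sign exactly where $A(h)=0$, that is, at $h=(1-1/d)/2<\tfrac12$. This formalises that the self term keeps the signal alive slightly into the heterophilic side.

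\textbf{Main obstacle.} The hardest step is justifying the uniform weight $1/(d{+}1)$. With heterogeneous degrees, the weights $1/\sqrt{\tilde d_i\tilde d_j}$ are random and correlated with the graph. I would handle this in two stages. First, I would prove the result for $d$-regular graphs, where it is exact. Second, under CSBM concentration ($d_j=d(1+O_P(d^{-1/2}))$), I would show that replacing the random weights by $1/(d{+}1)$ changes $m$ and $s^2$ only by relative factors $1+o(1)$. Those factors cancel in the source/target ratio to leading order, provided source and target share the same degree scale. I would state that shared degree scale as the implicit assumption behind the formula.
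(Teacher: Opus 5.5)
Your proposal is correct and follows the paper's proof essentially step for step. The paper also replaces the symmetric normalization by the uniform weight $1/(d{+}1)$, obtains the same mean $rA(h)/(d{+}1)$ and variance $\sigma^2B(h)/(d{+}1)^2$, fixes $a$ by $\kap_{SL}(h_s)=1$, and reads off the $d\to\infty$ limit. It takes the uniform-weight step as an explicit regular-degree assumption (Assumption~\ref{as:reg}) rather than proving it.

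Your extra checks are correct and are points the paper leaves implicit: the zero of $A(h)$ at $h=(1-1/d)/2$, and the requirement that source and target share the same $d$. One caveat on your second stage: a concentration argument that gives only $1+o(1)$ factors certifies the leading-order formula but not the $O(1/d)$ self-loop refinement. Degree fluctuations enter at that same order (the paper's own error estimate $O(\mathrm{Var}(d_i)/d^2)$ is $O(1/d)$ for CSBM degrees), which is why the paper, like your first stage, simply assumes regular degrees.
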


\begin{remark}
The self-loop term adds a deterministic same-class contribution ($+r$ with certainty,
since a node is always its own class) that does not exist for a node's noisy neighbours.
This shifts the zero-crossing of the aggregated signal from exactly $h=1/2$ (mean
aggregation) to $h=(1{-}1/d)/2$, slightly below $1/2$: self-loops buy a small but
degree-dependent margin of heterophily tolerance before the signal vanishes, consistent
with the common empirical observation that adding self-loops helps GCNs on mildly
heterophilic graphs.
\end{remark}

\begin{proposition}[$K$ classes]\label{prop:K}
With symmetric simplex means, conditional on $y_i=c$, $\E[z_i]=c_K(h)\,\mu_c$ with
$c_K(h)=(hK{-}1)/(K{-}1)$ (zero at $h=1/K$, equal to $2h-1$ for $K{=}2$). Homophily shift
rescales the whole logit vector by $c_K(h_t)/c_K(h_s)$, so a single temperature suffices.
\end{proposition}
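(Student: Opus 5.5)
The plan is to compute $\E[z_i\mid y_i=c]$ directly from the CSBM and linearity of expectation, and then show that, to first order in the large-degree Gaussian regime of Lemma~\ref{lem:agg}, homophily shift acts on the logit vector as a scalar rescaling.

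\textbf{Step 1 (mean of the aggregate).} Condition on $y_i=c$ and on the degree $d_i$. Each neighbour $j$ of $i$ has the same class as $i$ with probability $h$, by the definition $h=p/(p+q(K{-}1))$. Otherwise, by the balanced-class and symmetric-$q$ assumptions, $j$ belongs to each of the $K{-}1$ other classes with probability $(1{-}h)/(K{-}1)$. The noise $\xi_j$ has zero mean, so
\[
\E[x_j\mid y_i=c]=h\mu_c+\tfrac{1-h}{K-1}\textstyle\sum_{c'\ne c}\mu_{c'}.
\]
The constraint $\sum_{c'}\mu_{c'}=0$ gives $\sum_{c'\ne c}\mu_{c'}=-\mu_c$, so the expectation equals $\big(h-\tfrac{1-h}{K-1}\big)\mu_c=\tfrac{hK-1}{K-1}\mu_c$. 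Averaging over the $d_i$ neighbours leaves this unchanged, which gives $c_K(h)$. The two special values are immediate: $c_K(1/K)=0$, and for $K=2$ we get $c_K(h)=2h-1$, consistent with Lemma~\ref{lem:agg}.

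\textbf{Step 2 (logit vector).} The classifier is linear, $\ell_i=Wz_i+b$, with $b=0$ by class symmetry. Write $z_i=c_K(h)\mu_{y_i}+\eta_i$. For large $d$, the fluctuation $\eta_i$ has covariance $\big(\Sigma_{\text{mix}}(h)+\sigma^2 I\big)/d$, where $\Sigma_{\text{mix}}$ comes from the random class labels of the neighbours. This fluctuation is $O(1/\sqrt d)$, and the logits concentrate around $c_K(h)\,W\mu_{y_i}$. A change of homophily from $h_s$ to $h_t$ therefore multiplies this deterministic logit vector by $c_K(h_t)/c_K(h_s)$. Since softmax temperature scaling is exactly a scalar rescaling of the whole logit vector, the temperature $T=c_K(h_s)/c_K(h_t)$ maps the target mean logits back to the source ones, so one temperature suffices.

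\textbf{Step 3 (matching the noise, main obstacle).} The difficulty is making the claim hold beyond the means, in the sense of Theorem~\ref{thm:kappa}, where $\kap$ also carries a variance ratio. The approach is to use the simplex symmetry. With $\mu_c$ forming a regular simplex of radius $r$, $\Sigma_{\text{mix}}$ is invariant under class permutations. Its restriction to $\operatorname{span}\{\mu_c\}$ is then a multiple of the identity there, namely $\tfrac{\alpha(h)r^2}{K-1}\big(\tfrac{K}{K-1}\big)$-type isotropic, with a scalar $\alpha(h)$ computed from the multinomial neighbour-class variance. The projected noise is therefore isotropic in the discriminant subspace, with variance $(\alpha(h)r^2+\sigma^2)/d$.

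The Bayes posterior for an isotropic Gaussian mixture with simplex means is a softmax of linear logits proportional to $\tfrac{c_K(h)}{\alpha(h)r^2+\sigma^2}\mu_c^\top z$. Both the source-calibrated model and the target-Bayes model are thus the same linear logit up to the scalar factor
\[
\kap=\frac{c_K(h_t)\big(\alpha(h_s)\rho+1\big)}{c_K(h_s)\big(\alpha(h_t)\rho+1\big)}.
\]
This is a single global temperature, and it reduces to \eqref{eq:kappa} at $K=2$, where $\alpha=4h(1{-}h)$. The delicate point is verifying the isotropy of $\Sigma_{\text{mix}}$ on the simplex subspace. I would do this by writing $\Sigma_{\text{mix}}=\tfrac{1}{d}\big(\E[\mu_{y_j}\mu_{y_j}^\top]-c_K^2\mu_c\mu_c^\top\big)$. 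If it turns out not to be exactly isotropic, because of the $\mu_c\mu_c^\top$ direction, I would fall back to the proposition's weaker literal claim, which concerns rescaling of the mean logit vector in Step 2.
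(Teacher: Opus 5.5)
Your Steps 1 and 2 are the paper's proof. The paper computes $\E[z_i\mid y_i=c]=h\mu_c-(1-h)\mu_c/(K-1)$ via $\sum_{c'}\mu_{c'}=0$, notes that the factor $c_K(h)$ is common to all class directions, and stops there; a remark explicitly leaves the exact multiclass slope underived. So your fallback covers the statement as posed. Step 3, however, is not merely delicate. It fails for every $K\ge3$, so the fallback is forced.

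The faulty step is the symmetry claim. Conditioning on $y_i=c$ leaves only the permutations that fix $c$. On $\operatorname{span}\{\mu_{c'}\}$, the symmetric matrices invariant under that subgroup form the two-parameter family $\alpha P+\beta\mu_c\mu_c^\top$, where $P$ is the orthogonal projector onto the span; they are not just multiples of $P$. Carrying out the computation you set up, with $\sum_{c'}\mu_{c'}\mu_{c'}^\top=\frac{Kr^2}{K-1}P$ for the regular simplex, gives
\[
\mathrm{Cov}\big(\mu_{y_j}\mid y_i=c\big)=\frac{K(1-h)}{(K-1)^2}\Big((hK-1)\,\mu_c\mu_c^\top+r^2P\Big).
\]
So the variance along $\mu_c$ is $hK$ times the variance along span directions orthogonal to $\mu_c$. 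For $K=2$ there are no such directions, and this collapses to $4h(1-h)r^2$, as in Lemma~\ref{lem:agg}.

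For $K\ge3$ and $h\notin\{1/K,1\}$, the class-conditional covariances $\Sigma_c$ therefore differ across classes by rank-one terms. The mixture is heteroscedastic, and the Bayes posterior is a softmax of \emph{quadratic} functions of $z$. Writing $\Sigma_c=S_c/d$, the log-odds between $c$ and $c'$ contains $\tfrac{d}{2}\,z^\top(S_{c'}^{-1}-S_c^{-1})z$. This term is of the same order in $d$ as the linear part, so it does not disappear in the large-degree limit. No temperature applied to a linear softmax can reproduce this posterior, so your $\kap$ with $\alpha(h)$ is not an exact multiclass slope. The paper likewise proves only the mean rescaling and leaves the multiclass calibration behaviour to simulation for $K=3,4,5$.

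Be careful, too, about what Step 2 actually delivers. Matching mean logits does not identify the calibrating temperature. Already at $K=2$, Theorem~\ref{thm:kappa} gives $\Tstar=1/\kap$. This differs from your $T=c_K(h_s)/c_K(h_t)$ by the factor $\big(1+4h_t(1-h_t)\rho\big)/\big(1+4h_s(1-h_s)\rho\big)$, which (for $\rho>0$) equals $1$ only when $h_t(1-h_t)=h_s(1-h_s)$.

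Nor is concentration the right justification. Source calibration forces the logit scale $a$ to grow like $d$ (proof of Theorem~\ref{thm:kappa}), so logit fluctuations are of order $\sqrt{d}$, and calibration is decided at exactly that scale. The mean statement needs only linearity of expectation. The phrase ``a single temperature suffices'' should be read as the paper reads it: a qualitative consequence of a common scalar rescaling of the mean logit vector, not a derivation of the optimal $T$ or of exact calibration.
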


\begin{remark}
$c_K(h)$ is the natural $K$-class generalization of $(2h{-}1)$: a same-class neighbour
(probability $h$) contributes $+\mu_c$, and a different-class neighbour (probability
$1{-}h$, uniform over the remaining $K{-}1$ classes) contributes $-\mu_c/(K{-}1)$ in
expectation because the simplex means sum to zero. The zero-crossing moves from $1/2$ to
$1/K$ as the number of classes grows: the more classes there are, the less homophily is
needed before a random neighbour is more likely to be a different class than not
(Fig.~\ref{fig:schem} in Section~\ref{sec:exp} illustrates this schematically). Because
the rescaling by $c_K(h_t)/c_K(h_s)$ is common to every class direction, it acts on the
full $K$-dimensional logit vector exactly as a scalar temperature would, which is why a
single global temperature remains sufficient even in the multi-class case, as confirmed
numerically for $K=3,4,5$ in Section~\ref{subsec:exp-theory}.
\end{remark}

\begin{proposition}[Covariate shift]\label{prop:cov}
At fixed $h$, if the target feature-noise variance changes $\sigma^2\!\mapsto\!\sigma^2+\tau^2$
($\gamma=\tau^2/\sigma^2$), then with source calibration
$\kap_{\mathrm{cov}}=\frac{4h(1{-}h)\rho+1}{4h(1{-}h)\rho+1+\gamma}<1$ and
$\Tstar=1+\gamma/(4h(1{-}h)\rho+1)>1$: covariate shift always induces over-confidence,
monotonically in $\gamma$.
\end{proposition}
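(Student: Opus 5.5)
The plan is to reduce Proposition~\ref{prop:cov} to Theorem~\ref{thm:kappa} by recomputing only the variance term, since the signal term is untouched. By Lemma~\ref{lem:agg}, the aggregated coordinate $g$ conditional on $y=+1$ has mean $(2h-1)r$ and variance $(4h(1-h)r^2+\sigma^2)/d$. The mean depends only on the class means $\pm r\hat\mu$ and on $h$, and adding independent zero-mean noise of variance $\tau^2$ to each feature does not change it. The label-mixing part $4h(1-h)r^2$ of the variance is likewise untouched. Re-running the Lemma~\ref{lem:agg} computation with noise $\xi_i\sim\mathcal N(0,(\sigma^2+\tau^2)I)$, the projection $\hat\mu^\top\xi_i$ now has variance $\sigma^2+\tau^2$, so the target variance becomes $(4h(1-h)r^2+\sigma^2+\tau^2)/d$. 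Gaussianity in the large-degree limit follows exactly as before.

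Next, I will apply the closed form $\kap=2m/s^2$ from Theorem~\ref{thm:kappa}, with the scale $a$ frozen at its source-calibrated value. On the source, calibration gives $\kap_s=1$, i.e.\ $2a(2h-1)r\,d=a^2(4h(1-h)r^2+\sigma^2)$. On the target, $m$ is unchanged and only $s^2$ changes, so the ratio of variances gives
\[
\kap_{\mathrm{cov}}=\frac{4h(1-h)r^2+\sigma^2}{4h(1-h)r^2+\sigma^2+\tau^2}.
\]
Dividing numerator and denominator by $\sigma^2$ and substituting $\rho=r^2/\sigma^2$ and $\gamma=\tau^2/\sigma^2$ yields the stated expression. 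Degree cancels here just as in Eq.~\eqref{eq:kappa}, and so does $a$, which enters $m$ linearly and $s^2$ quadratically.

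Then $\Tstar=1/\kap_{\mathrm{cov}}=1+\gamma/(4h(1-h)\rho+1)$, which is exactly the temperature identity from Theorem~\ref{thm:kappa}. For the qualitative claims, $4h(1-h)\rho+1>0$, so $\gamma>0$ gives $\kap_{\mathrm{cov}}<1$, and by Theorem~\ref{thm:kappa} this is over-confidence. Monotonicity holds because $\kap_{\mathrm{cov}}$ has the form $c/(c+\gamma)$ with $c>0$, which is strictly decreasing in $\gamma$.

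The only step needing care is the calibration convention. Theorem~\ref{thm:kappa} parametrizes the logit as $\delta=ag$ with $h$ fixed, and one must check that the source-calibrated $a$ is legitimately carried over unchanged to the target. I also need $h\neq 1/2$ so that $m\neq0$ and the ratio is well defined. If $h<1/2$, then $a$ is negative and $m$ is still positive after orientation, so the ratio argument goes through unchanged. No further obstacle is expected; this is a direct specialization of the slope formula.
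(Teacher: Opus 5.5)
Your proposal is correct and is essentially the paper's own proof. The signal $m$ is untouched, and only the noise part of $s^2$ grows from $\sigma^2$ to $\sigma^2+\tau^2$. Source calibration then gives $\kappa_{\mathrm{cov}}=s^2(0)/s^2(\tau)$, and dividing by $\sigma^2$ yields the formula with $T^{\star}=1/\kappa_{\mathrm{cov}}$. Your explicit monotonicity check via $c/(c+\gamma)$ and your remark on the sign of $a$ when $h<1/2$ just fill in details the paper leaves implicit.
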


\begin{remark}
Unlike homophily shift, covariate shift is one-directional: because extra feature noise
can only inflate the logit variance $s^2$ (it cannot change the signal $m$, which depends
only on the class means and homophily), $\kap_{\mathrm{cov}}$ can only fall below $1$.
This is a clean, falsifiable prediction (covariate shift never causes
under-confidence), which Section~\ref{subsec:exp-real-cov} confirms on five real graphs:
the confidence-accuracy gap grows monotonically, and only in the over-confident direction,
as the relative feature-noise strength increases.
\end{remark}

\begin{proposition}[ECE bound]\label{prop:bound}
$\mathrm{ECE}\le\tfrac14|\kap-1|\,\E|\delta|$; it vanishes as $\kap\to1$ and grows with
the logit magnitude.
\end{proposition}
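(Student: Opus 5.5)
We read the ECE here as the population (binning-free) quantity $\mathrm{ECE}=\E\big|\Pr(\text{correct}\mid\delta)-\text{confidence}(\delta)\big|$. We work directly from Theorem~\ref{thm:kappa}, which gives a pointwise description of the miscalibration: conditional on logit value $\delta=t$, accuracy is $\soft(\kap t)$ and confidence is $\soft(t)$. Both are even functions of the magnitude in the relevant sense, so writing $u=|\delta|$ the pointwise gap is $|\soft(\kap u)-\soft(u)|$. The binned ECE is no larger than this population quantity. The reason is that averaging within a bin is a convex operation: $|\E_B(\text{acc}-\text{conf})|\le \E_B|\text{acc}-\text{conf}|$ by Jensen. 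So it suffices to bound $\E\,|\soft(\kap|\delta|)-\soft(|\delta|)|$.

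The key step is a Lipschitz/mean-value estimate. For fixed $u\ge 0$, the mean value theorem gives $\soft(\kap u)-\soft(u)=\soft'(\xi)\,(\kap-1)u$ for some $\xi$ between $u$ and $\kap u$. We then use the global bound $\soft'(\xi)=\soft(\xi)(1-\soft(\xi))\le \tfrac14$. This yields $|\soft(\kap u)-\soft(u)|\le \tfrac14|\kap-1|\,u$ for every $u$. Taking expectations over the distribution of $\delta$ gives
\[
\mathrm{ECE}\le \tfrac14|\kap-1|\,\E|\delta|.
\]
The two qualitative claims are then immediate. First, the right-hand side is $0$ at $\kap=1$ and continuous in $\kap$. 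Second, it is linear in $\E|\delta|$, so the bound grows with logit magnitude. We may note that $\E|\delta|$ has the closed form of a folded normal with parameters $(m,s^2)$ from Theorem~\ref{thm:kappa}, namely $s\sqrt{2/\pi}\,e^{-m^2/2s^2}+m(1-2\Phi(-m/s))$.

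The main point of care is the case $\kap<0$ (when $h_t<1/2$). Here $\kap u$ and $u$ lie on opposite sides of $0$, so "confidence" must be interpreted consistently. We keep the paper's convention: confidence is $\soft(|\delta|)$, and accuracy given $|\delta|=u$ is $\soft(\kap u)$ by the symmetry of the two class-conditional Gaussians. The mean value theorem still applies on the segment $[\kap u,u]$, and the $\tfrac14$ derivative bound is global, so the argument is unchanged. The only genuinely delicate step is justifying that conditional accuracy depends only on $|\delta|$ and equals $\soft(\kap|\delta|)$ on both signs of $\delta$. This follows from the symmetry $\delta\mid y{=}{-}1\overset{d}{=}-(\delta\mid y{=}{+}1)$ together with balanced classes, which is exactly the likelihood-ratio computation already underlying Theorem~\ref{thm:kappa}.
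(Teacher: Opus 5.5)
Your proof is correct and is essentially the paper's argument: write the population ECE as $\E\,|\Pr(\text{correct}\mid\delta)-\text{conf}(\delta)|$, substitute the pointwise gap $|\soft(\kap|\delta|)-\soft(|\delta|)|$ from Theorem~\ref{thm:kappa}, use that $\soft$ is $\tfrac14$-Lipschitz (your mean-value step with $\soft'\le\tfrac14$ is exactly this), and take expectations. Your additions are correct refinements that the paper leaves implicit: the Jensen step showing that population binned ECE is dominated by this quantity, and the explicit check that accuracy is $\soft(\kap|\delta|)$ for both signs of $\delta$, so the bound also covers $\kap<0$.
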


\begin{remark}
The bound follows from the $\tfrac14$-Lipschitz constant of the sigmoid and turns
Theorem~\ref{thm:kappa}'s \emph{pointwise} statement into a single scalar diagnostic:
because $\E|\delta|$ is observable from the model's own logits on the target graph
\emph{without labels}, this bound gives a fully label-free, if conservative, upper
estimate of how miscalibrated a frozen model can possibly be under a shift of a given
severity $|\kap-1|$, useful as an early-warning signal even before attempting the
recalibration of Section~\ref{sec:method}. The bound is tightest, and the recalibration
correspondingly most consequential, exactly when confident predictions ($|\delta|$ large)
coincide with a shift severe enough to move $\kap$ far from $1$; Section~\ref{sec:exp}
confirms empirically that the bound is tight near the source and looser, but never
violated, far from it.
\end{remark}

\section{STAC: a source-free, label-free recalibration method}\label{sec:method}
Theorem~\ref{thm:kappa} says the entire job is to estimate one number, $\Tstar=1/\kap$,
from \emph{unlabeled} target data (Fig.~\ref{fig:pipeline}, Algorithm~\ref{alg:stac}). I
call the resulting method \textbf{STAC} (Structure-aware Test-time Adaptive
Calibration). STAC \emph{(i)} estimates target accuracy $\hat a$ from unlabeled data by
combining thresholded confidence~\citep{garg2022atc} with prediction disagreement under
input/edge perturbations~\citep{jiang2022disagreement}, then \emph{(ii)} chooses the
single $T$ so that mean confidence equals $\hat a$. The method is source-free (only a
scalar threshold is carried from training), adds $O(1)$ parameters, and needs no GPU.

\begin{figure*}[t]
\centering
\begin{tikzpicture}[
  node distance=4mm and 3mm,
  box/.style={rounded corners,draw,align=center,minimum height=1.15cm,
              inner sep=2pt,font=\scriptsize,text width=1.65cm},
  mod/.style={box,fill=cMod!16,draw=cMod!70},
  tgt/.style={box,fill=cTgt!14,draw=cTgt!75},
  ar/.style={-{Latex[length=1.8mm]},thick,gray!60!black}]
\node[mod](f){Frozen GNN $f$\\ ($+$ thresh.\ $\tau$)\\ \tiny from source};
\node[tgt,right=of f](lg){Target $G_t$:\\ logits $\delta$\\ \tiny shifted, unlabeled};
\node[tgt,right=of lg](est){Perturb $\times M$\\ $\Rightarrow$ est.\\ accuracy $\hat a$};
\node[tgt,right=of est](T){Set $\Tstar{=}1/\hat\kap$:\\ match mean-conf\\ to $\hat a$};
\node[tgt,right=of T](out){Calibrated\\ preds.\\ $\soft(\delta/\Tstar)$};
\draw[ar](f)--(lg); \draw[ar](lg)--(est); \draw[ar](est)--(T); \draw[ar](T)--(out);
\begin{scope}[on background layer]
  \node[fill=cTgt!5,rounded corners,fit=(lg)(est)(T)(out),inner sep=2.4mm,
        label={[cTgt!60!black,font=\scriptsize\bfseries]below:Target / test time: source-free, label-free}]{};
\end{scope}
\end{tikzpicture}
\caption{\textbf{STAC.} At test time the frozen source model is applied to the shifted,
unlabeled target graph; a few perturbed passes yield a label-free accuracy estimate
$\hat a$, which fixes a single temperature $\Tstar=1/\hat\kap$ so that mean confidence
matches $\hat a$. No target labels and no source data are used.}
\label{fig:pipeline}
\end{figure*}

\begin{algorithm}[t]
\small
\caption{STAC: label-free test-time recalibration (global stage)}\label{alg:stac}
\begin{algorithmic}[1]
\Require frozen model $f$, source threshold $\tau$, target graph $G_t$, passes $M$
\State $\delta\gets$ logits of $f$ on $G_t$;\quad $\hat a_{\text{ATC}}\gets$ frac.\ of
       $\max\soft(\delta)\!\ge\!\tau$
\State run $M$ perturbed passes; $\hat a_{\text{GDE}}\gets 1-$ mean pairwise disagreement
\State $\hat a\gets\min(\hat a_{\text{ATC}},\hat a_{\text{GDE}})$
\State $\Tstar\gets\arg\min_T\big(\text{mean}_i\,\max_k\soft(\delta_i/T)_k-\hat a\big)^2$
\State \Return calibrated probabilities $\soft(\delta/\Tstar)$
\end{algorithmic}
\end{algorithm}

\paragraph{Complexity.} Algorithm~\ref{alg:stac} requires $M$ additional forward passes
of the (frozen) model over the target graph (no gradients, no retraining), plus an
$O(M^2)$ pairwise-disagreement computation per node and a one-dimensional line search for
$T$. In my experiments $M=6$--$10$ suffices, and the whole procedure runs in seconds on
CPU for graphs with tens of thousands of nodes; the dominant cost is the $M$ forward
passes, which is the same order of cost as computing $M$-sample Monte Carlo dropout
uncertainty, but here applied at test time to a deterministic frozen model via input and
edge perturbations rather than at training time to a stochastic one.

\paragraph{Optional per-node stage.} A second, optional stage modulates the global
temperature per node, $T_i=T_{\text{glob}}\exp(c\tanh(\phi_i^\top w+b))$, where $\phi_i$
collects structure-aware signals (feature- and edge-perturbation instability,
homophily-corrected neighbour-agreement, prediction entropy, margin, and log-degree), fit
by minimizing the same label-free binned calibration loss with a small ridge penalty on
$w$. By Corollary~\ref{cor:global}, this stage is \emph{not expected to help} under
homogeneous shift, and I include it purely as an empirical test of that prediction: an
ablation without structural signals (logits only) isolates whether any residual benefit
comes from graph structure specifically or merely from having a higher-capacity per-node
function. Section~\ref{subsec:exp-method} reports the outcome of this ablation.

\paragraph{Relation to the theory.} The reduction to a single scalar $\hat a$ is not a
simplifying choice I make for convenience; it is what Theorem~\ref{thm:kappa} and
Corollary~\ref{cor:global} \emph{prove} is sufficient under homogeneous shift. This is
what distinguishes STAC's global stage from the per-node calibrators surveyed in
Section~\ref{sec:related}: those methods use a per-node function because they have no
theoretical reason to expect a global scalar to be optimal, whereas STAC uses a global
scalar \emph{because} the theory says it is optimal, and only reaches for a per-node
correction as a deliberately falsifiable ablation.

\section{Experiments}\label{sec:exp}
All experiments run on CPU using only NumPy/SciPy/scikit-learn; code and data are
released. I report ECE (15 bins) throughout.

\subsection{Theory validation (synthetic)}\label{subsec:exp-theory}
On a binary CSBM ($h_s{=}0.8,\rho{=}1$), the measured slope $\kap=1/T_{\text{oracle}}$
matches the closed form \eqref{eq:kappa} across $h_t\in[0.55,0.9]$, capturing both
over- and under-confidence, with $\kap\approx1$ at the source (Fig.~\ref{fig:val}, left);
ECE grows from $0.004$ to $0.25$ as $h_t$ leaves $h_s$ and $\Tstar$ returns it to
$\le0.02$ (right). Figure~\ref{fig:schem} illustrates schematically why homophily has
this effect: aggregation averages same- and different-class neighbours, and the resulting
signal along the true class direction is scaled by $c_K(h)$, which vanishes and flips
sign at $h=1/K$ (Proposition~\ref{prop:K}). The self-loop slope $\kap_{SL}$ matches the
GCN simulation and is closer to it than the mean-aggregation formula in the heterophilous
regime (Fig.~\ref{fig:ext}, left); the signal coefficient $c_K(h)$ matches the measured
value \emph{exactly} for $K=3,4,5$ (right). The bound of Proposition~\ref{prop:bound}
holds throughout (Fig.~\ref{fig:bound}) and, as anticipated in the remark following it, is
tightest near the source homophily and looser, but never violated, far from it.

\begin{figure}[t]
\centering
\begin{tikzpicture}[every node/.style={circle,draw,minimum size=5.5mm,inner sep=0}]
\node[fill=cSrc!55](c) at (0,0){};
\foreach \a/\col in {20/cSrc!55,70/cAcc!55,120/cSrc!55,175/cMod!60,240/cAcc!55,300/cSrc!55}
  {\node[fill=\col](n\a) at (\a:1.35){}; \draw[gray!55] (c)--(n\a);}
\node[draw=none,anchor=west,font=\footnotesize] at (2.0,0.35)
  {same-class (\textcolor{cSrc}{$\bullet$}) fraction $h$};
\node[draw=none,anchor=west,font=\footnotesize] at (2.0,-0.35)
  {signal $c_K(h)=\tfrac{hK-1}{K-1}$, flips at $h{=}\tfrac1K$};
\end{tikzpicture}
\caption{\textbf{Why homophily controls calibration.} Aggregation averages same- and
different-class neighbours; the resulting signal along the true class is scaled by
$c_K(h)$, which vanishes and flips sign at $h=1/K$.}
\label{fig:schem}
\end{figure}

\begin{figure}[t]
\centering
\includegraphics[width=\linewidth]{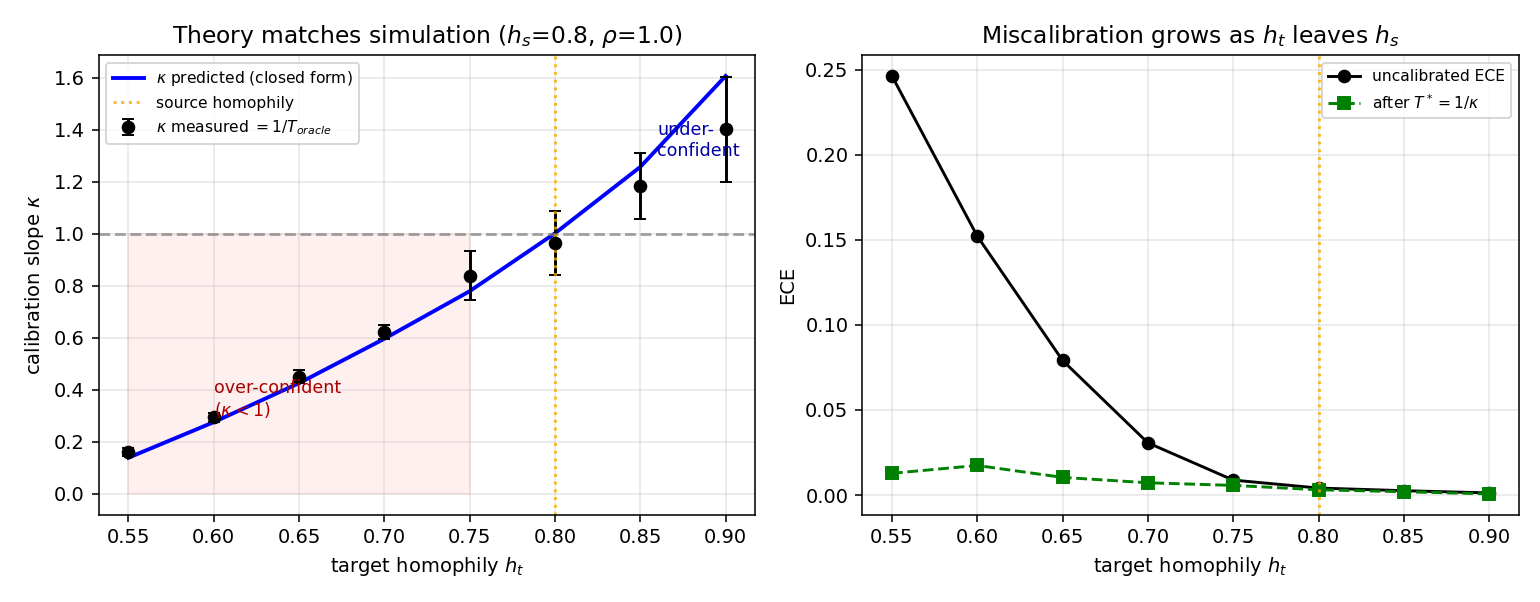}
\caption{\textbf{Theorem~\ref{thm:kappa}.} Closed-form slope vs.\ measured
$1/T_{\text{oracle}}$ (left); ECE before/after $\Tstar$ (right).}
\label{fig:val}
\end{figure}

\begin{figure}[t]
\centering
\includegraphics[width=\linewidth]{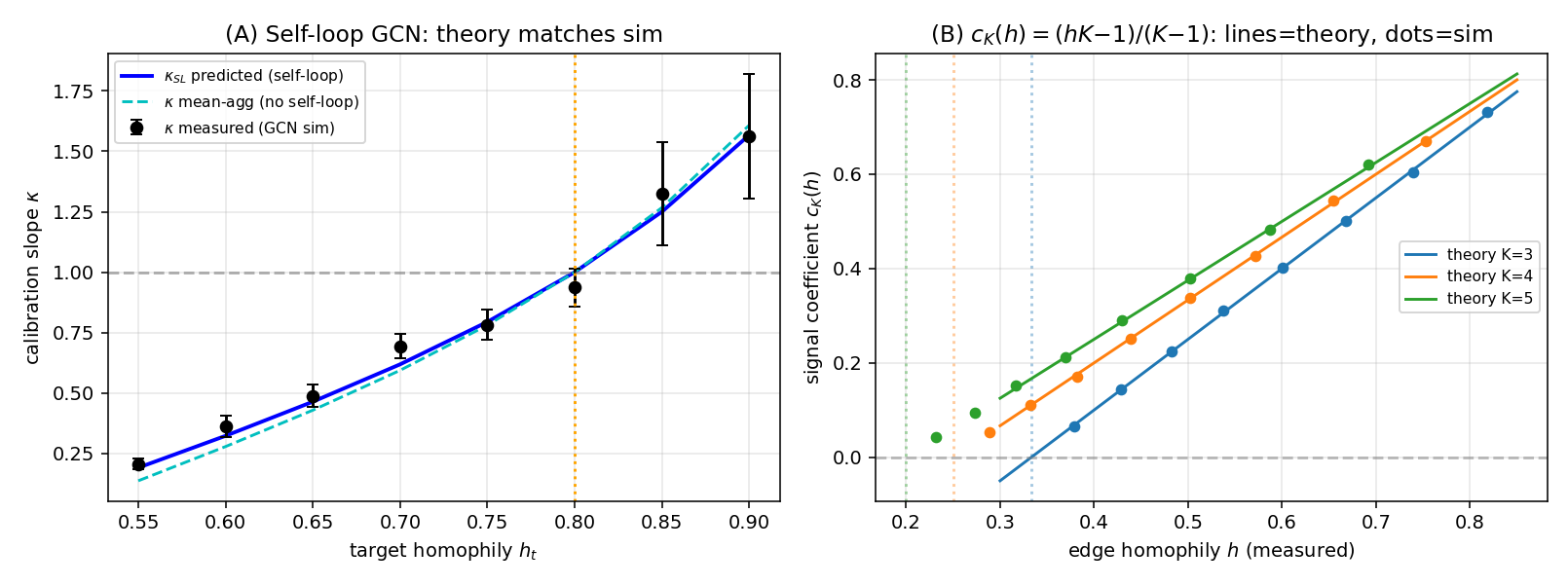}
\caption{\textbf{Extensions.} Self-loop GCN slope $\kap_{SL}$ (left) and the $K$-class
signal coefficient $c_K(h)$ (right): lines are theory, markers are simulation.}
\label{fig:ext}
\end{figure}

\begin{figure}[t]
\centering
\includegraphics[width=0.86\linewidth]{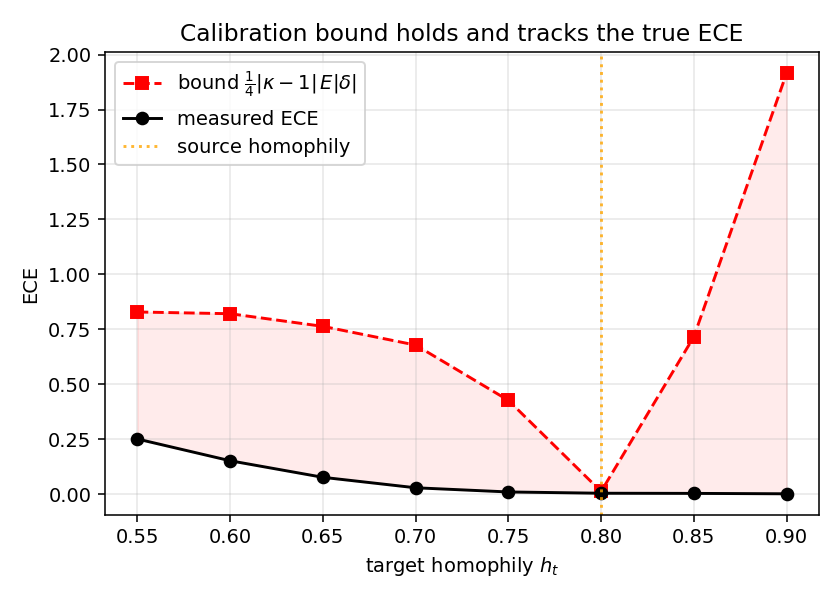}
\caption{\textbf{Proposition~\ref{prop:bound}.} The bound $\tfrac14|\kap-1|\,\E|\delta|$
upper-bounds the measured ECE and is tight near the source.}
\label{fig:bound}
\end{figure}

\subsection{The closed form predicts the \emph{optimal} temperature}
Beyond signs, the theory predicts the \emph{value} $\Tstar=1/\kap$. Over a grid of
source/target homophilies and SNRs on the binary CSBM, the predicted $\Tstar$ matches the
oracle temperature measured on labeled target data with Pearson $r=0.99$ and a mean
absolute error of $0.11$ temperature units (Fig.~\ref{fig:temp}). The closed form is thus
quantitatively predictive, not merely directional.

\begin{figure}[t]
\centering
\includegraphics[width=0.82\linewidth]{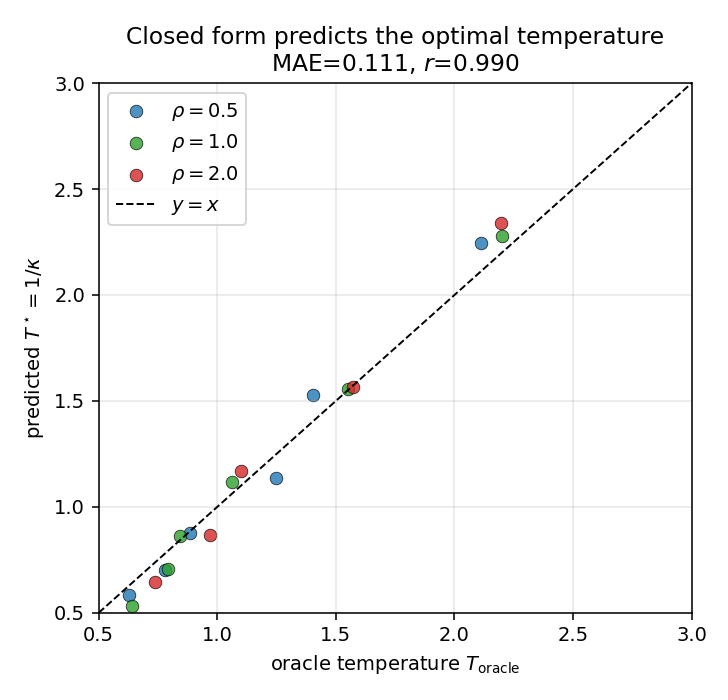}
\caption{\textbf{Optimal temperature.} Predicted $\Tstar=1/\kap$ vs.\ the oracle
temperature across $(h_s,h_t,\rho)$ configurations; points hug $y{=}x$
($r{=}0.99$, MAE $0.11$).}
\label{fig:temp}
\end{figure}

\subsection{STAC under shift (synthetic)}\label{subsec:exp-method}
On CSBM node classification with a trained 2-layer GCN, across feature, homophily, and
heterogeneous shifts, STAC's global stage roughly \emph{halves} the ECE gap versus
source-temperature scaling and approaches the oracle. Figure~\ref{fig:mech} makes the
mechanism concrete under increasing covariate-shift intensity. The left panel shows ECE
for four methods: the uncalibrated model degrades steadily as shift grows; source-TS,
fit once on the source validation split, barely helps and at high shift intensity is
close to indistinguishable from no calibration at all, because it targets the
\emph{source} accuracy rather than the falling target accuracy; STAC-global tracks
roughly half-way between source-TS and the oracle at every shift level; the oracle
(which uses target labels solely to fit $T$, never to retrain) stays flat and low
throughout. The right panel diagnoses \emph{why}: it plots the true target accuracy
against the two candidate proxies a calibrator could use. The `Source-TS assumption'
(dotted) is exactly the source validation accuracy, which is shift-invariant by
construction and therefore departs from the true target accuracy (solid) as shift
grows; this gap is the empirical face of $\kap$ moving away from $1$ as shift intensity
increases, and it is why source-fitted calibrators fail under shift
(Section~\ref{subsec:disc-sourcefit}). STAC's label-free accuracy estimate (dashed)
tracks the true target accuracy far more closely than the source assumption does, but not
perfectly; the residual gap between the dashed and solid curves is exactly what limits
STAC-global's ECE relative to the oracle in the left panel, and it is the same gap that
becomes unmanageable on real, heterophilous graphs in Section~\ref{subsec:exp-real}.

\begin{figure}[t]
\centering
\includegraphics[width=\linewidth]{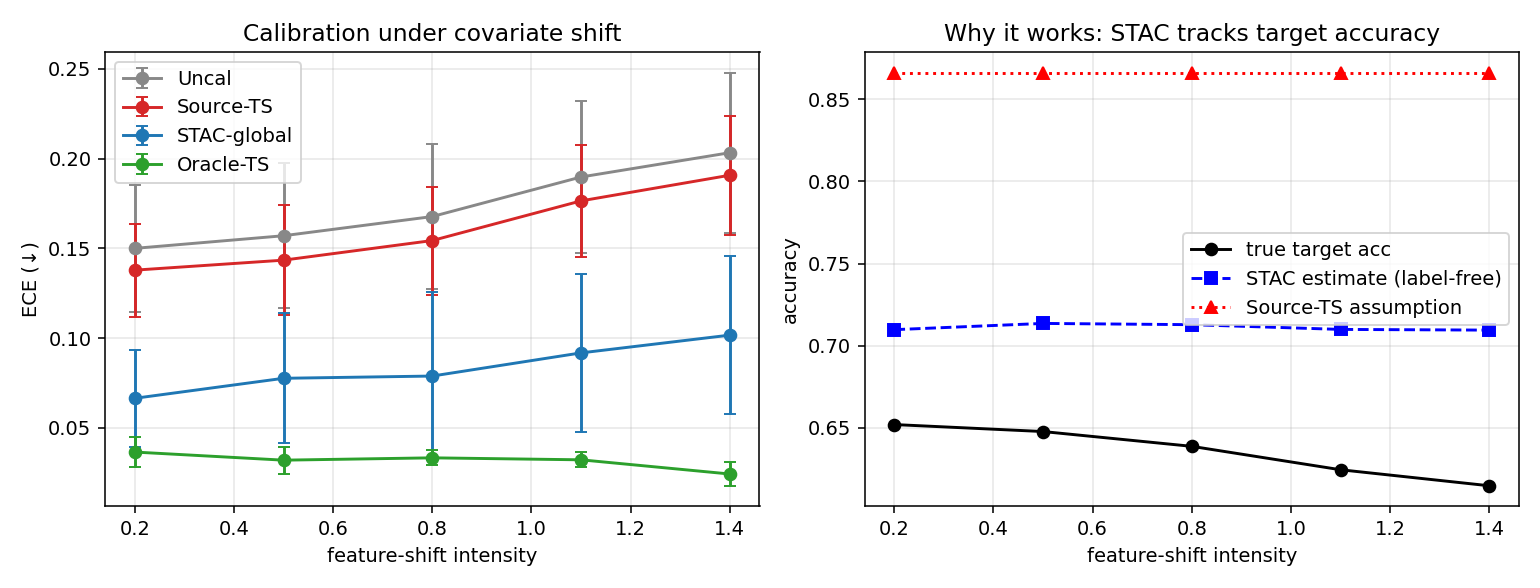}
\caption{\textbf{Mechanism (synthetic feature shift).} \emph{Left:} ECE vs.\ shift
intensity for the uncalibrated model, source-TS, STAC-global, and the oracle;
STAC-global consistently halves the source-TS gap. \emph{Right:} the true target
accuracy (solid) falls as shift grows, while the source-TS assumption (dotted, frozen at
the source validation accuracy) does not move at all: this divergence is the mechanism
behind source-TS's failure; STAC's label-free estimate (dashed) tracks the true accuracy
much more closely, though not exactly.}
\label{fig:mech}
\end{figure}

Under the heterogeneous (`mixed') shift, where only a random half of target nodes
receive a strong feature perturbation, so no single $\kap$ describes the whole graph, a
single temperature (STAC-global, identical to source-TS by construction once its level is
fixed) cannot, by definition, change the \emph{ranking} of nodes by confidence, so its
error-detection AUROC exactly matches the uncalibrated model's. Only the optional
per-node stage (STAC-full, Section~\ref{sec:method}), which modulates $T_i$ using
structure-aware signals, can move this ranking; in my runs it does so only marginally,
and an ablation without structural signals (logits only) performs comparably, indicating
that, at least for the perturbation signals I tried, graph structure does not add
error-detection information beyond what the model's own logits already carry. This is
precisely the falsifiable prediction of Corollary~\ref{cor:global} put to the test: a
single global scale is a genuinely different problem from finding a useful per-node
ranking signal, and my attempt at the latter with off-the-shelf structural features does
not yet succeed; I return to this as an open problem in
Section~\ref{subsec:disc-heterogeneous}. For reference, under feature shift (target
accuracy $0.61$): Uncalibrated ECE $0.203$, Source-TS $0.191$, STAC-global $0.100$,
Oracle $0.024$.

\subsection{Real graphs across the homophily spectrum}\label{subsec:exp-real}
I evaluate on five real graphs~\citep{platonov2023critical}, namely roman-empire
($h{=}0.05,K{=}18$), amazon-ratings ($0.38,5$), tolokers ($0.59,2$), minesweeper
($0.68,2$) and questions ($0.84,2$), under a fixed relative covariate shift
(Fig.~\ref{fig:real}, Table~\ref{tab:real}). The robust finding matches the theory:
a single \emph{oracle} temperature recovers calibration on \emph{every} graph
(ECE $0.015$--$0.024$), across the full homophily range and multi-class settings. However
the \emph{label-free} STAC is not reliable here: because the accuracy estimate overshoots
under shift (e.g.\ amazon-ratings $\hat a{=}0.81$ vs.\ true $0.38$), it can worsen ECE
($0.085\!\to\!0.432$). In-distribution calibrators
\citep{wang2021cagcn,hsu2022gats,zhuang2025gets,li2025wats,tang2024simcalib} are
source-fitted and thus inherit the failure of source-TS under shift; empirical
uncertainty-quantification methods designed with shift in mind, such as
G-$\Delta$UQ~\citep{trivedi2024gduq}, could plausibly supply a better $\hat a$ here, since
neither it nor STAC's simple ATC/disagreement estimator was designed for the
heterophilous, high-$K$ regime of roman-empire; I see this as a promising, direct avenue
for future work (Section~\ref{subsec:disc-crux}) rather than a competing claim. I report
STAC's negative result plainly.

\begin{figure}[t]
\centering
\includegraphics[width=\linewidth]{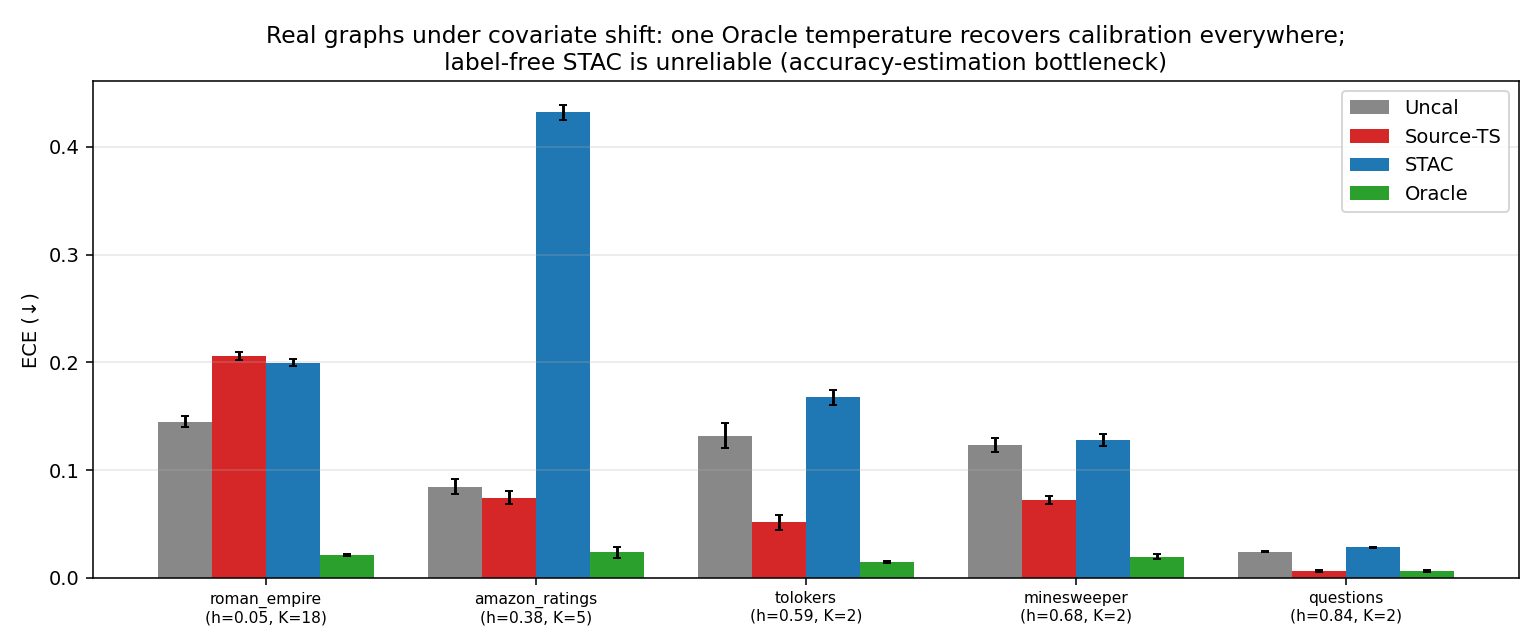}
\caption{\textbf{Five real graphs under covariate shift.} One oracle temperature recovers
calibration across the homophily spectrum; STAC is unreliable
(accuracy-estimation bottleneck).}
\label{fig:real}
\end{figure}

\begin{table}[t]
\centering\small
\caption{ECE ($\downarrow$) on real graphs under covariate shift (3 splits).}
\label{tab:real}
\begin{tabular}{lccccc}
\toprule
graph & $h$ & Uncal. & Src-$T$ & STAC & Oracle\\
\midrule
roman-empire   & 0.05 & 0.145 & 0.206 & 0.200 & \textbf{0.022}\\
amazon-ratings & 0.38 & 0.085 & 0.075 & 0.432 & \textbf{0.024}\\
tolokers       & 0.59 & 0.132 & 0.052 & 0.168 & \textbf{0.015}\\
minesweeper    & 0.68 & 0.123 & 0.072 & 0.128 & \textbf{0.020}\\
questions      & 0.84 & 0.024 & 0.006 & 0.028 & \textbf{0.007}\\
\bottomrule
\end{tabular}
\end{table}

\subsection{Covariate-shift strength on real graphs}\label{subsec:exp-real-cov}
Proposition~\ref{prop:cov} predicts that covariate shift makes the frozen model
progressively over-confident, and \emph{only} over-confident. Sweeping the relative
feature-noise strength $\gamma$ on all five graphs, the confidence$-$accuracy gap grows
monotonically with $\gamma$ and never changes sign (Fig.~\ref{fig:sweep}a; e.g.\
roman-empire $-0.07\!\to\!+0.57$, minesweeper $+0.07\!\to\!+0.29$), while a single oracle
temperature keeps ECE low throughout (Fig.~\ref{fig:sweep}b). The nearly flat curve for
questions ($h{=}0.84$, accuracy $0.97$) is expected: Proposition~\ref{prop:cov} gives a
gap $\propto\gamma/(4h(1{-}h)\rho{+}1)$, small when accuracy is high.

\begin{figure}[t]
\centering
\includegraphics[width=\linewidth]{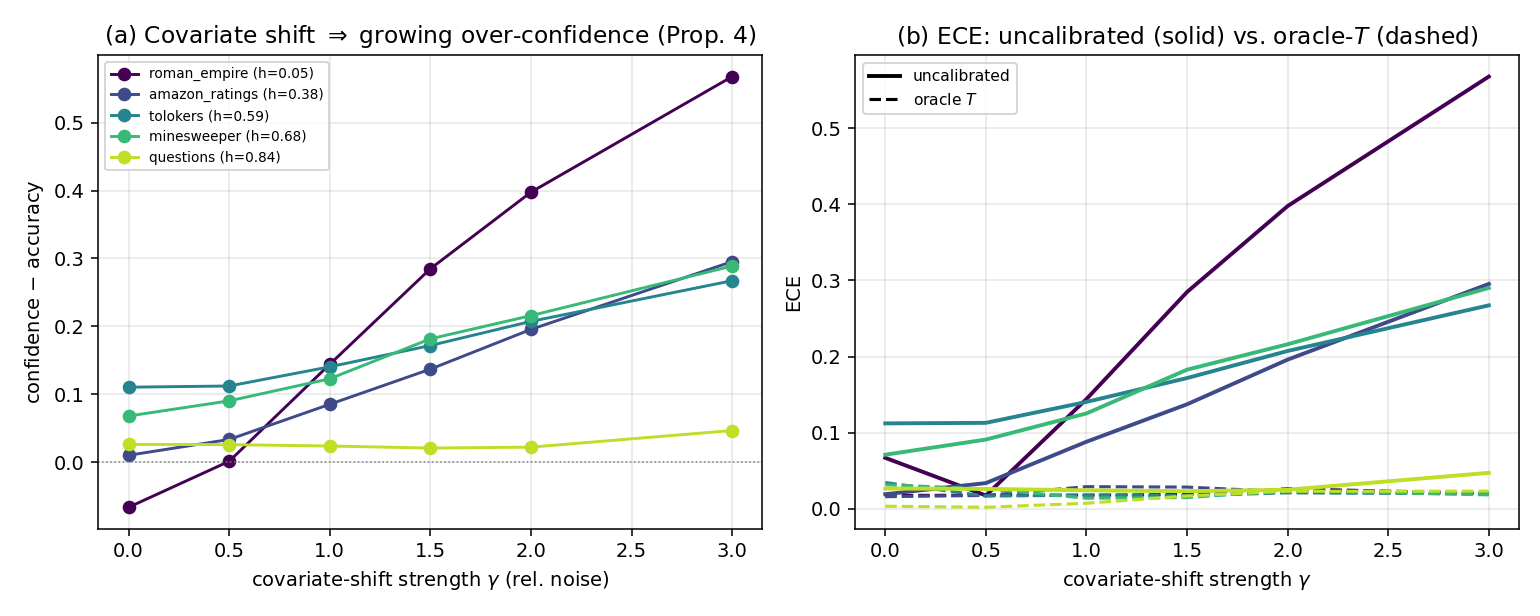}
\caption{\textbf{Covariate-shift sweep (Proposition~\ref{prop:cov}).} (a) over-confidence
grows with shift strength on every real graph; (b) uncalibrated ECE (solid) rises while a
single oracle temperature (dashed) stays low.}
\label{fig:sweep}
\end{figure}

\subsection{A different shift type: structural rewiring}
To test breadth beyond covariate noise I rewire a fraction of the target edges. This
miscalibrates several graphs (e.g.\ minesweeper ECE $0.07\!\to\!0.14$ at $75\%$ rewiring),
and a single oracle temperature again recovers calibration across all five
(ECE $\le0.03$; Fig.~\ref{fig:struct}). For class-imbalanced graphs whose random-edge
homophily is already high (questions, tolokers) rewiring changes homophily little and
calibration is largely unaffected, consistent with the theory, in which the effect is
mediated by the change in edge homophily.

\begin{figure}[t]
\centering
\includegraphics[width=0.86\linewidth]{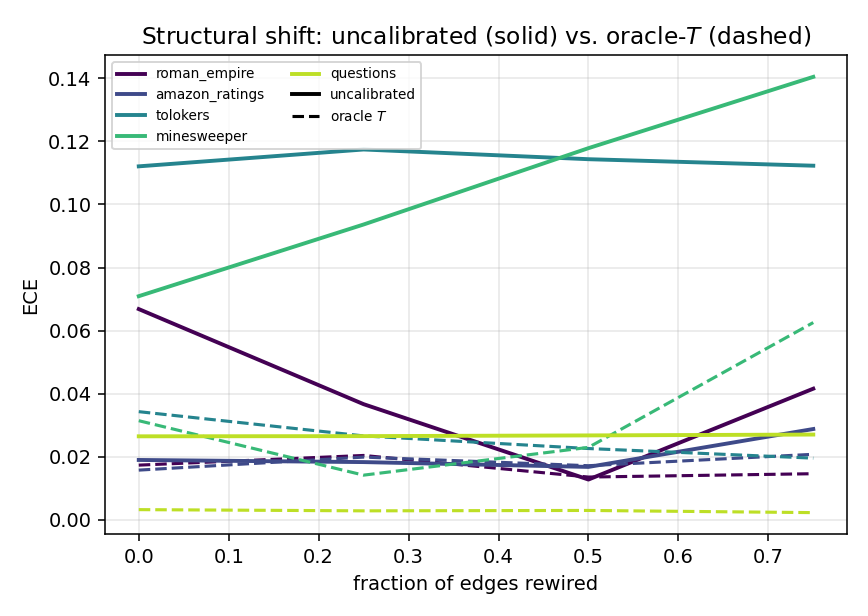}
\caption{\textbf{Structural shift.} Uncalibrated ECE (solid) vs.\ oracle temperature
(dashed) as target edges are rewired; one temperature recovers calibration across graphs.}
\label{fig:struct}
\end{figure}

\section{Discussion, limitations, and open problems}\label{sec:discussion}
I merge discussion and limitations deliberately: every limitation below is the direct,
falsifiable counterpart of a claim made in Section~\ref{sec:theory}, and reading them
together is what makes the theory's scope honest rather than merely optimistic.

\subsection{The calibration slope as a diagnostic}
Beyond motivating STAC, $\kap$ is useful on its own: Eq.~\eqref{eq:kappa} can be
evaluated from an estimate of $(h_s,h_t,\rho)$ alone, quantities that are far easier to
estimate without labels than accuracy itself (homophily can be estimated from a
label-propagation proxy or from the model's own agreement statistics; SNR from feature
clustering), to \emph{predict} whether a deployed GNN is likely to be over- or
under-confident on a new graph before any recalibration is attempted at all, using
Proposition~\ref{prop:bound} as a conservative severity bound.

\subsection{Why source-fitted calibrators fail under shift}\label{subsec:disc-sourcefit}
Every in-distribution calibrator surveyed in Section~\ref{sec:related} (temperature
scaling itself, CaGCN, GATS, GETS, WATS, SimCalib) fits its correction (a scalar or a
per-node function) by minimizing NLL on a labeled split assumed to represent the
distribution the model will see at test time. Theorem~\ref{thm:kappa} makes explicit
\emph{why} this assumption breaking is fatal: the fitted correction is optimal for
$\kap(h_s,h_s,\rho)=1$ by construction, but the deployment-time slope is
$\kap(h_s,h_t,\rho)\ne1$ whenever $h_t\ne h_s$, and no amount of additional labeled
\emph{source} data changes this, because the source split never observes $h_t$. This is
precisely the mechanism Fig.~\ref{fig:mech} makes visible: source-TS's assumption curve
is flat by construction, while the true target accuracy moves. Methods with a per-node
correction do not escape this: a per-node function fit on source data still targets
$\kap(h_s,h_s,\rho)=1$ pointwise, uniformly across nodes, so it fails in exactly the same
direction as global temperature scaling, merely with additional, source-fitted degrees of
freedom that provide no protection against a shift in $h$.

\subsection{When does per-node calibration help?}
Corollary~\ref{cor:global} localizes the value of per-node calibration precisely: it
helps if and only if different nodes experience \emph{different} effective slopes $\kap_i$
(heterogeneous shift), never as a matter of raw modeling capacity. This reframes the
comparison in Section~\ref{sec:related} between global and per-node calibrators: the
question is not ``does a more expressive per-node model calibrate better'' but ``is the
shift itself homogeneous or heterogeneous'', which is a property of the deployment
scenario, not of the calibrator. My heterogeneous-shift ablation
(Section~\ref{subsec:exp-method}) is a first attempt to answer this operationally, and it
suggests that answering it well requires per-node \emph{shift-detection} signals that are
more informative than the structural features I tried; see
Section~\ref{subsec:disc-heterogeneous}.

\subsection{The crux is label-free accuracy estimation}\label{subsec:disc-crux}
This is the paper's central limitation, and I state it as plainly as possible: STAC's
entire practical reliability rests on $\hat a$, and Section~\ref{subsec:exp-real} shows
that on real, heterophilous, high-$K$ graphs (roman-empire, amazon-ratings), both ATC and
disagreement-based estimators can be badly wrong, overshooting true accuracy by a wide
margin, and thereby set $\Tstar$ in the wrong direction. Two implications follow, one
constructive and one methodological. Constructively, the oracle results
(Table~\ref{tab:real}) show the theory's ceiling is high: \emph{if} $\hat a$ were
accurate, a single temperature would suffice everywhere I tested, from $h{=}0.05$ to
$h{=}0.84$ and up to $K{=}18$ classes, so the return on improving label-free accuracy
estimation specifically \emph{for graphs} is large. Methodologically, ATC and
disagreement were designed and validated primarily on covariate shift for
i.i.d.\ image/text data~\citep{garg2022atc,jiang2022disagreement}; graphs add at least two
confounds absent there: aggregation compounds shift across hops, so a node's effective
shift depends on its neighbourhood's shift, not only its own features, and homophily
itself is a further axis of shift that has no counterpart in the i.i.d.\ setting these
estimators were built for. A concrete, falsifiable next step is a graph-native accuracy
estimator that conditions disagreement on the local homophily estimated at the same node,
or that combines STAC's estimator with the epistemic-uncertainty signal of
G-$\Delta$UQ~\citep{trivedi2024gduq} (Section~\ref{sec:related}), which was explicitly
validated under graph covariate and concept shift and could plausibly correct exactly the
overshoot I observe on amazon-ratings and roman-empire.

\subsection{Scope of the theory}\label{subsec:disc-scope}
The closed forms assume Gaussian class-conditional features and are asymptotic in degree
$d$ (a Berry--Esseen-type correction is $O(d^{-1/2})$); Section~\ref{sec:exp} shows this
is already accurate at moderate degree ($d\!\approx\!20$) but I do not claim a
non-asymptotic, finite-degree guarantee. The CSBM is also a \emph{fixed-homophily} model:
real graphs mix a global homophily with substantial node-to-node heterogeneity, which is
exactly the gap between the homogeneous-shift theory of Section~\ref{sec:theory} and the
heterogeneous-shift ablation of Section~\ref{subsec:exp-method}. Finally, near chance
accuracy (as on parts of roman-empire and amazon-ratings) the logistic approximation
underlying Theorem~\ref{thm:kappa} is least informative, precisely the regime where
label-free accuracy estimation is also hardest (Section~\ref{subsec:disc-crux}); these
two limitations compound rather than being independent, which is why real-graph
performance in Table~\ref{tab:real} is weakest exactly where the pointwise theory is
weakest.

\subsection{Heterogeneous shift is the open per-node problem}\label{subsec:disc-heterogeneous}
Corollary~\ref{cor:global} says \emph{when} per-node calibration is needed
(heterogeneous shift) but not \emph{how} to detect, node by node, which effective $\kap_i$
applies; my attempt (Section~\ref{subsec:exp-method}) with generic structural
perturbation signals moved error-detection AUROC only marginally above a
structure-free baseline. I view this negative result as informative rather than
discouraging: it narrows the open problem to finding per-node signals that specifically
track \emph{local} shift severity; candidates include local homophily estimated from a
node's immediate neighbourhood (directly motivated by Eq.~\eqref{eq:kappa}), a per-node
analogue of the epistemic-uncertainty estimator of Trivedi et al.~\citep{trivedi2024gduq}, or the
per-node conformal width produced by Huang et al.~\citep{huang2023conformal}, which is a different
notion of per-node reliability (set size) that could plausibly correlate with $\kap_i$
without requiring labels.

\subsection{Broader outlook}
I see three natural combinations with the empirical literature surveyed in
Section~\ref{sec:related}. First, G-$\Delta$UQ's per-model epistemic-uncertainty
estimate~\citep{trivedi2024gduq} could serve directly as the per-node shift-severity
signal that Section~\ref{subsec:disc-heterogeneous} shows is currently missing. Second,
the distribution-free coverage guarantees of conformalized
GNNs~\citep{huang2023conformal} address a set-valued, not pointwise, notion of
reliability; combining a globally-corrected temperature with a conformal wrapper is
plausible and would let the two guarantees (closed-form pointwise calibration and
distribution-free coverage) compose rather than compete. Third, GDA-SpecReg's
risk bound~\citep{you2023gda} bounds target \emph{error}; an accuracy bound of that kind,
if tight enough, is itself a candidate label-free $\hat a$ for STAC, directly addressing
Section~\ref{subsec:disc-crux}.

\section{Conclusion}\label{sec:conclusion}
This paper gives, to my knowledge, the first closed-form account of how distribution
shift moves the confidence-accuracy relationship of a graph neural network: a single
calibration slope $\kap$, explicit in source/target homophily and feature
signal-to-noise ratio, determines simultaneously the \emph{direction} of miscalibration
($\kap<1$ over-confident, $\kap>1$ under-confident), its \emph{magnitude} (via the proven
ECE bound), and its \emph{exact correction} ($\Tstar=1/\kap$, provably optimal at the
population level and provably sufficient as a single global scalar under homogeneous
shift). The theory is not a qualitative story: it predicts the measured slope and the
oracle temperature quantitatively in simulation ($r=0.99$), and its central operational
consequence (a single temperature recovers calibration) holds on five real graphs
spanning two orders of magnitude in homophily and up to $18$ classes.

Turning the theory into a fully automatic, label-free recalibrator reduces to estimating
one number, target accuracy, from unlabeled data; I showed this reduction is correct in
that a good estimate (the oracle) suffices, but that today's off-the-shelf estimators
(ATC, disagreement) are not yet accurate enough on real, heterophilous graphs to realize
the theory's full promise. I see this not as a weakness of the theory but as its most
useful output: it isolates exactly the sub-problem (shift-robust, graph-native accuracy
estimation) on which future work should concentrate to convert closed-form understanding
into a deployable, zero-label GNN calibrator, and Section~\ref{sec:discussion} lays out
concrete, falsifiable directions toward that goal, including combinations with recent
empirical uncertainty-quantification and conformal-prediction methods for GNNs. More
broadly, I hope the calibration-slope viewpoint offers a template for other
structure-dependent reliability questions on graphs: write down the smallest model that
reproduces a failure mode in closed form, derive its falsifiable consequences, and test
how far its predictions travel toward real, deployed graph learning systems.

\appendix
\section{Assumptions and proofs}\label{app:proofs}
This appendix states the formal assumptions underlying Sections~\ref{sec:theory}, and
gives complete proofs of every theoretical claim in the main text. All results concern a
linear GNN on a CSBM in a large-degree regime; finite-degree accuracy is discussed in
Remark~\ref{rem:berry} and confirmed numerically in Section~\ref{sec:exp}
(Figs.~\ref{fig:val}--\ref{fig:ext}).

\subsection{Model and assumptions}
\begin{assumption}[CSBM]\label{as:csbm}
There are $N$ nodes and $K$ balanced classes with labels $y_i$ i.i.d.\ uniform on
$\{1,\dots,K\}$. Node features are $x_i=\mu_{y_i}+\xi_i$, with
$\xi_i\sim\mathcal N(0,\sigma^2 I_F)$ i.i.d.\ and class means satisfying $\|\mu_c\|=r$ for
all $c$ and $\sum_c\mu_c=0$ (for $K{=}2$, $\mu_+=r\hat\mu$, $\mu_-=-r\hat\mu$ for a unit
vector $\hat\mu$). Edges are drawn independently: for $i\ne j$, $\Pr(i\sim j)=p$ if
$y_i=y_j$ and $q$ otherwise. Write the edge homophily $h=p/(p+q(K{-}1))$ and the
signal-to-noise ratio $\rho=r^2/\sigma^2$.
\end{assumption}

\begin{assumption}[Large-degree regime]\label{as:deg}
The expected degree $d=p(\tfrac{N}{K}{-}1)+q\tfrac{N(K-1)}{K}$ satisfies $d\to\infty$ as
$N\to\infty$, with $h$ and $\rho$ held fixed.
\end{assumption}

\begin{assumption}[Source-calibrated Bayes-direction classifier]\label{as:clf}
The classifier is linear with weight along the Bayes-optimal direction ($\hat\mu$ for
$K{=}2$; the class-mean directions for $K{>}2$) and zero bias, optimal for the symmetric
Gaussian model of Assumption~\ref{as:csbm}. Its scalar scale $a>0$ is fixed by requiring
calibration on the \emph{source} graph, i.e.\ $\kap(h_s)=1$. A consistent estimator such
as regularization-free logistic regression recovers this direction and, after source
temperature scaling, this scale.
\end{assumption}

\begin{assumption}[Regular-degree approximation, self-loop operator only]\label{as:reg}
For Proposition~\ref{prop:sl} I approximate all degrees by the expected degree $d$ in
the symmetric normalization $\hat A_{ij}=((d_i{+}1)(d_j{+}1))^{-1/2}\approx(d{+}1)^{-1}$.
The error is $O(\mathrm{Var}(d_i)/d^2)$ and vanishes under Assumption~\ref{as:deg} for
concentrated degrees.
\end{assumption}

Throughout, $\soft(u)=(1+e^{-u})^{-1}$ and $\varphi$ is the standard normal density.

\subsection{Proof of Lemma~\ref{lem:agg} (aggregated coordinate)}
\begin{proof}
For a neighbour $j$, $\hat\mu^\top x_j=\hat\mu^\top\mu_{y_j}+\hat\mu^\top\xi_j=y_j r+\zeta_j$
with $\zeta_j\sim\mathcal N(0,\sigma^2)$ i.i.d.\ (since $\hat\mu^\top\mu_\pm=\pm r$). Hence
$g_i=\frac1{d_i}\big(r\sum_{j\sim i}y_j+\sum_{j\sim i}\zeta_j\big)$. Given $y_i{=}{+}1$,
each neighbour is same-class ($y_j{=}{+}1$) with probability $h$ and opposite
($y_j{=}{-}1$) with probability $1{-}h$, independently; thus with
$k=\#\{j:y_j{=}{+}1\}\sim\mathrm{Binomial}(d_i,h)$ I have $\sum_j y_j=2k-d_i$, so
$\E[\sum_j y_j]=d_i(2h{-}1)$ and $\mathrm{Var}[\sum_j y_j]=4\,\mathrm{Var}(k)=4d_ih(1{-}h)$.
The noise sum has mean $0$ and variance $d_i\sigma^2$ and is independent of the labels.
Dividing by $d_i$ gives the stated mean and variance exactly. Asymptotic normality
follows from the Lindeberg CLT applied to the independent bounded label terms and the
Gaussian noise term.
\end{proof}

\begin{remark}[Finite-degree error]\label{rem:berry}
By the Berry--Esseen theorem the CDF of $g_i$ differs from its Gaussian limit by
$O(d_i^{-1/2})$. Empirically the closed forms hold to a few percent already at
$d\approx20$ (Section~\ref{sec:exp}); accuracy degrades only when target accuracy
approaches $\tfrac12$ or $1$, where the calibration slope is inherently ill-conditioned.
\end{remark}

\subsection{Proof of Theorem~\ref{thm:kappa} and Corollary~\ref{cor:global}}
\begin{proof}
Take $t>0$ (the case $t<0$ is symmetric), so the prediction is $+1$ and the event
``correct'' is $\{y{=}{+}1\}$. With balanced classes and the two Gaussian likelihoods,
\[
\Pr(y{=}{+}1\mid\delta{=}t)=\frac{\varphi\!\big(\tfrac{t-m}{s}\big)}
{\varphi\!\big(\tfrac{t-m}{s}\big)+\varphi\!\big(\tfrac{t+m}{s}\big)}
=\frac1{1+\exp\!\big(-\tfrac{(t+m)^2-(t-m)^2}{2s^2}\big)}
=\frac1{1+e^{-2mt/s^2}}=\soft(\kap t).
\]
The model outputs $\soft(\delta)$ as the probability of class $+1$; for a prediction at
$\delta=t>0$ its confidence is $\soft(t)$. Calibration requires $\soft(t)=\soft(\kap t)$
for all $t$, i.e.\ $\kap=1$; the sign of $\kap-1$ gives over-/under-confidence. Applying
temperature $T$ maps confidence to $\soft(t/T)$, which equals the true posterior
$\soft(\kap t)$ for all $t$ iff $1/T=\kap$. Because the temperatured model then coincides
with the true conditional distribution, it minimizes the population NLL; hence
$\Tstar=1/\kap$. Finally,
\[
\kap=\frac{2m}{s^2}=\frac{2a(2h{-}1)r}{a^2(4h(1{-}h)r^2+\sigma^2)/d}
=\frac{2d(2h{-}1)r}{a(4h(1{-}h)r^2+\sigma^2)}.
\]
Source calibration $\kap(h_s)=1$ gives $a=\dfrac{2d(2h_s{-}1)r}{4h_s(1{-}h_s)r^2+\sigma^2}$;
substituting into $\kap(h_t)$, the common factors $2dr$ cancel, and dividing numerator and
denominator by $\sigma^2$ (so $r^2/\sigma^2=\rho$) yields Eq.~\eqref{eq:kappa}. Degree
cancels.

For Corollary~\ref{cor:global}: under homogeneous shift, $\Tstar=1/\kap$ makes the model
equal to the true conditional, achieving the minimum of population NLL and zero
population calibration error, by the argument above applied pointwise. Population NLL as
a function of a per-node scaling $T_i$ is, at fixed $\delta_i$, strictly convex in $1/T_i$
with a unique minimizer $1/\Tstar$; therefore any assignment $\{T_i\}$ with $T_i\ne\Tstar$
on a positive-measure set strictly increases population NLL relative to the constant
assignment $T_i\equiv\Tstar$. Thus per-node temperatures cannot improve calibration under
a homogeneous $\kap$; they can only help when $\kap$ varies across nodes (heterogeneous
shift).
\end{proof}

\subsection{Proof of Proposition~\ref{prop:sl} (self-loop GCN)}
\begin{proof}
Under Assumption~\ref{as:reg}, $z_i=\frac1{d+1}\big(x_i+\sum_{j\sim i}x_j\big)$.
Projecting on $\hat\mu$ and conditioning on $y_i{=}{+}1$: the self term contributes
$r+\zeta_i$ (same-class, deterministic mean $+r$); the $d$ neighbour terms contribute as
in Lemma~\ref{lem:agg}. Hence
$\E[\hat\mu^\top z_i]=\frac{r}{d+1}(1+d(2h{-}1))=\frac{rA(h)}{d+1}$, and
$\mathrm{Var}[\hat\mu^\top z_i]=\frac1{(d+1)^2}\big(4dr^2h(1{-}h)+(d{+}1)\sigma^2\big)
=\frac{\sigma^2B(h)}{(d+1)^2}$ (the self term adds one unit of noise variance but no label
variance). With $\delta=a\,\hat\mu^\top z_i$ I get
$\kap_{SL}(h)=\frac{2m}{s^2}=\frac{2arA(h)/(d{+}1)}{a^2\sigma^2B(h)/(d{+}1)^2}
=\frac{2r(d{+}1)A(h)}{a\sigma^2B(h)}$. Fixing $a$ by $\kap_{SL}(h_s)=1$ and forming the
ratio, all factors except $A$ and $B$ cancel, giving the claim. As $d\to\infty$,
$A(h)\sim d(2h{-}1)$ and $B(h)\sim d(4h(1{-}h)\rho{+}1)$, recovering Theorem~\ref{thm:kappa}.
\end{proof}

\subsection{Proof of Proposition~\ref{prop:K} ($K$ classes)}
\begin{proof}
A neighbour is same-class (probability $h$), contributing $\mu_c$, or different-class
(probability $1{-}h$) drawn uniformly among the $K{-}1$ other classes, with conditional
mean $\frac1{K-1}\sum_{c'\ne c}\mu_{c'}=-\frac{\mu_c}{K-1}$ because $\sum_{c'}\mu_{c'}=0$.
Hence
\[
\E[z_i\mid y_i{=}c]=h\mu_c+(1{-}h)\Big({-}\frac{\mu_c}{K-1}\Big)
=\mu_c\,\frac{h(K{-}1)-(1{-}h)}{K-1}=\mu_c\,\frac{hK-1}{K-1}.
\]
Since this scaling is common to all class directions, the source-trained logits scale by
$c_K(h_t)/c_K(h_s)$ under shift.
\end{proof}
\begin{remark}
The exact multiclass calibration slope requires the order statistics of the softmax and
is not needed here: Corollary~\ref{cor:global}'s mechanism (uniform logit rescaling
$\Rightarrow$ a single temperature suffices) carries over, and is confirmed numerically
for $K{=}3,4,5$ (Fig.~\ref{fig:ext}).
\end{remark}

\subsection{Proof of Proposition~\ref{prop:cov} (covariate shift)}
\begin{proof}
Extra zero-mean feature noise leaves $m$ unchanged and increases the logit variance to
$s^2(\tau)=a^2(4h(1{-}h)r^2+\sigma^2+\tau^2)/d$. Thus
$\kap_{\mathrm{cov}}=2m/s^2(\tau)=\kap(0)\cdot s^2(0)/s^2(\tau)$; with $\kap(0)=1$ (source
calibration) and dividing by $\sigma^2$, the formula follows, and
$\Tstar=1/\kap_{\mathrm{cov}}$.
\end{proof}

\subsection{Proof of Proposition~\ref{prop:bound} (ECE bound)}
\begin{proof}
Population ECE $=\E_\delta\big[|\Pr(\text{correct}\mid\delta)-\text{conf}(\delta)|\big]$.
By Theorem~\ref{thm:kappa}, at $\delta=t$ this integrand is $|\soft(\kap t)-\soft(t)|$
(with $t=|\delta|$ the predicted confidence's logit). Since $\soft$ is
$\tfrac14$-Lipschitz, $|\soft(\kap t)-\soft(t)|\le\tfrac14|\kap t-t|=\tfrac14|\kap-1||t|$.
Taking expectation over $\delta$ gives the bound. It vanishes as $\kap\to1$ (no shift) and
grows with the logit magnitude.
\end{proof}

\subsection{Discussion of scope}
The results are exact in the large-degree Gaussian limit (Assumption~\ref{as:deg}) with a
Bayes-direction, source-calibrated linear classifier (Assumption~\ref{as:clf}). Two
honest caveats, expanded operationally in Section~\ref{subsec:disc-scope}: \emph{(i)}
finite-degree corrections are $O(d^{-1/2})$ (Remark~\ref{rem:berry}), negligible in the
moderate-accuracy regime but growing as accuracy approaches $\tfrac12$ or $1$;
\emph{(ii)} for deep nonlinear GNNs the closed forms hold only qualitatively, though the
operational consequence (a single global temperature suffices under homogeneous
shift) is confirmed on real graphs in Section~\ref{subsec:exp-real}
(e.g.\ amazon-ratings ECE $0.085\!\to\!0.024$ after one oracle temperature).

\bibliographystyle{elsarticle-num}
\bibliography{references}

\end{document}